\pdfoutput=1
\documentclass{article}
\usepackage{preprint,times}

\usepackage{amsmath,amsfonts,bm}

\def\eqref#1{equation~\ref{#1}}

\def\1{\bm{1}}

\DeclareMathAlphabet{\mathsfit}{\encodingdefault}{\sfdefault}{m}{sl}
\SetMathAlphabet{\mathsfit}{bold}{\encodingdefault}{\sfdefault}{bx}{n}

\usepackage[utf8]{inputenc}
\usepackage[T1]{fontenc}
\usepackage{amsmath}
\usepackage{amssymb}
\usepackage{amsthm}

\usepackage{algorithm}
\usepackage{algpseudocode}
\let\REQUIRE\Require
\let\ENSURE\Ensure
\let\STATE\State
\let\FOR\For
\let\ENDFOR\EndFor
\let\IF\If
\let\ELSE\Else
\let\ENDIF\EndIf
\usepackage{array,longtable}
\usepackage{booktabs}
\usepackage{enumitem}
\usepackage{float}
\usepackage{flafter}
\usepackage{placeins}
\usepackage{graphicx}
\usepackage{microtype}
\usepackage{multirow}
\usepackage{tabularx}
\usepackage{tikz}
\usepackage{xcolor}
\usepackage{url}
\usepackage{hyperref}
\hypersetup{hidelinks}
\graphicspath{{Figs/}}
\usetikzlibrary{arrows.meta,positioning,fit}
\definecolor{stableblue}{RGB}{26,92,164}
\definecolor{stableorange}{RGB}{224,116,38}
\definecolor{stablegreen}{RGB}{34,139,94}

\title{\raggedright TurboPairFormer: Fast and Stable Protein Folding Model Training
with an Optimized Triangle Attention Kernel}
\author{
  \hspace{-1mm}\textbf{Yide Ran}$^{1}$\textnormal{,} 
  \textbf{Chelsea Lowman}$^{2}$\textnormal{,}  
  \textbf{Jan Domański}$^{3}$\textnormal{,}  
  \textbf{David Hartmann}$^{2}$\textnormal{,}  
  \textbf{Jenke Scheen}$^{3}$\textnormal{,}  \\
  \textbf{Jennifer Wei}$^{3}$, 
  \textbf{Chuan Li}$^{2}$, 
  \textbf{Jianwen Xie}$^{\dagger2}$, 
  \textbf{Zhaozhuo Xu}$^{\dagger1}$ \\
  $^1$Stevens Institute of Technology, 
  $^2$Lambda, 
  $^3$Open Molecular Software Foundation
}
\newif\ifdraftnotes
\draftnotestrue
\begin{document}
\maketitle

\renewcommand*\thefootnote{}
\footnotetext{    
    $^{\dagger}$Equal advising \hspace{0.2cm}Website: \url{https://pypi.org/project/turbopairformer}
}

\begin{abstract}
\looseness=-1
Triangular attention is a core computation in AlphaFold3-style biomolecular models, with computational cost that grows cubically with token count. FlashAttention reduces attention memory traffic through IO-aware tiling and backward recomputation. However, the shared pair bias in triangular attention adds a gradient reduction across attention slices to the usual reductions over queries and keys. The open-source backends we examine handle these reductions through repeated probability recomputation, floating-point atomics, or full score-gradient storage. Separately, computing the softmax backward correction from BF16-rounded forward outputs incurs a loss of numerical precision. We present TurboPairFormer, a triangular attention implementation for NVIDIA Hopper GPUs that addresses these two issues. Our key-tile-parallel backward algorithm recomputes each probability tile once and reuses it for the query, key, value, and pair-bias gradients, using ordered partial reductions to achieve deterministic accumulation without floating-point atomics or full score-gradient storage. Our output-residual compensation retains a BF16 approximation of the output-rounding residual and uses it to compute the backward correction more accurately in FP32, without changing the BF16 output. With BF16 inputs at crop sizes 384, 640, and 768 and head dimensions 16 and 32, TurboPairFormer achieves the lowest mean query, key, and pair-bias gradient RMSE against an FP64 reference among eight evaluated implementations; residual compensation mitigates the precision loss from output rounding, reducing these RMSE values by 28--47\% in controlled ablations. All four gradients are bitwise identical across five repeated calls in all 600 input cases under fixed execution conditions. Integrated into OpenFold3 together with our triangle multiplication kernels, TurboPairFormer achieves the lowest GPU computation time per optimizer step among eight evaluated backend configurations on 16 H100 GPUs, with speedups of $1.73\times$ over OpenFold3's Triton backend and $1.13\times$ over cuEquivariance at crop size 768.
\end{abstract}
\section{Introduction}
\label{sec:intro}

AlphaFold3 predicts the joint structure of complexes containing proteins, nucleic acids, and small molecules~\citep{abramson2024alphafold3}. OpenFold3 provides an open-source implementation of AlphaFold3 with support for training and fine-tuning~\citep{openfold3preview}. Both models use one token per standard amino acid or nucleotide residue and one token per heavy atom for other components. They maintain a pair representation, with one learned vector for each pair of tokens, and repeatedly update it through triangular attention and triangle multiplication. In triangular attention, fixing one token index, which we call the outer index, selects the pair vectors between that token and every token in the crop. These vectors form an attention slice with one query and one key per token, as in standard attention, and all slices share the same pair bias. There is one slice per token, and the cost of each slice grows with the square of the token count. The cost of triangular attention therefore grows with the cube of the token count: doubling the training crop makes it eight times as expensive.

IO-aware kernels such as FlashAttention fuse attention into tiled kernels and recompute probabilities in the backward pass instead of storing them~\citep{dao2022flashattention}; their successors improve parallelism and overlap data movement with computation on Hopper GPUs~\citep{dao2023flashattention2,shah2024flashattention3}. However, extending these kernels to triangular attention requires an additional reduction of the shared pair-bias gradient across slices. The backward pass thus has three reduction directions (Figure~\ref{fig:design_comparison}a): the query gradient sums over keys, the key and value gradients over queries, and the pair-bias gradient over outer indices. All four gradients reuse the same probability tiles, but their reductions favor different work partitionings. With one recomputation, they share one work partitioning, so any gradient whose reduction axis is split across thread blocks requires a cross-block reduction.

The open-source triangular-attention kernels we inspect make different compromises to handle this reduction (Figure~\ref{fig:design_comparison}b): TriFast~\citep{atkins2025trifast} and Protenix~\citep{protenix2025} recompute the probabilities three times, DeepSpeed-Evo~\citep{song2023deepspeed4science} accumulates the pair-bias gradient with floating-point atomics whose order is not fixed, and ScaleFold~\citep{zhu2024scalefold} stores the full FP32 score gradient. FlashAttention-3's deterministic mode orders in-place query-gradient updates with semaphores, and DASH improves its scheduling~\citep{shah2024flashattention3,qiang2026dash}, but neither computes a bias gradient.

\begin{figure}[H]
\centering
\includegraphics[width=0.95\linewidth]{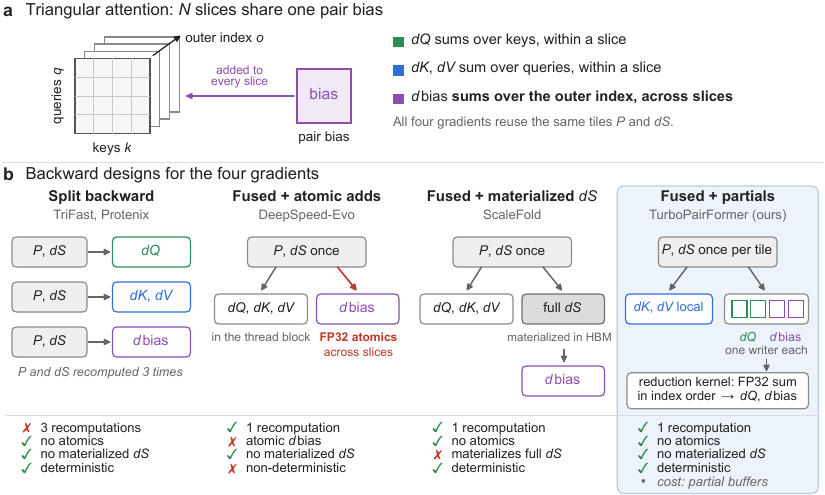}
\caption{\textbf{Triangular attention and backward design comparison.} (a)~Triangular attention consists of $N$ slices sharing one pair bias. Its backward pass involves three reduction axes: $dQ$ over keys, $dK$ and $dV$ over queries, and $d\,\mathrm{bias}$ across outer slices. All four gradients share the same intermediates ($P$ and $dS$).
(b)~Prior backends either recompute probabilities 3$\times$, use non-deterministic atomics, or store the full score gradient $dS$, whose size is cubic in $N$. TurboPairFormer evaluates $P$ and $dS$ only once: it keeps $dK$ and $dV$ in local registers, writes $dQ$ and $d\,\mathrm{bias}$ into independent partial buffers, and reduces them in a deterministic FP32 order.}
\label{fig:design_comparison}
\vspace{-1em}
\end{figure}

Determinism does not guarantee accuracy. In the softmax backward pass, all score gradients in a query row depend on the same correction $\delta$, the dot product of the output gradient and the forward output for that row~\citep{dao2022flashattention}. Kernels usually compute $\delta$ from the saved BF16 output, whose rounding error can shift these score gradients in the same direction. A deterministic kernel reproduces this error on every run. The forward pass matters as well: with 64-row query tiles and head dimension 32, a bias tile holds as many bytes as the key and value tiles combined, and recycling and checkpointing run two to five forward passes per backward pass. Our challenge is therefore threefold: to reuse each recomputed probability tile for all four gradients with fixed accumulation orders and sufficient parallelism, to compute $\delta$ accurately, and to stage the shared bias efficiently in the forward pass.

We propose TurboPairFormer, a triangular-attention kernel for Hopper GPUs:
\begin{enumerate}[leftmargin=*]
\item \textbf{Work partitioning for three reduction directions:} We analyze how the shared pair bias trades recomputation against parallelism and storage (Section~\ref{sec:problem} and Appendix~\ref{tpfapp:determinism}), and assign each thread block a key tile and a group of outer indices, which keeps the key and value gradients local and leaves two cross-block reductions.
\item \textbf{Single recomputation with ordered partial reductions:} Like DeepSpeed-Evo and ScaleFold, TurboPairFormer recomputes each probability tile once for all four gradients, but without floating-point atomics or full score-gradient storage. Query and pair-bias partials have unique writers and are reduced in fixed FP32 orders, without blocks waiting for one another. Gradient bits do not depend on thread-block completion order or on supported workspace schedules; the cost is a query-partial buffer cubic in crop size (Appendix~\ref{tpfapp:memory}).
\item \textbf{Output-residual compensation:} The forward epilogue saves a BF16 approximation of the output-rounding residual, which the backward pass adds to the BF16 output in FP32 when computing $\delta$~\citep{qiu2026lowprecision,henry2019leveraging}. This lowers the mean RMSE of the query, key, and pair-bias gradients by 28--47\% in controlled ablations, without changing the BF16 output or adding a recomputation; the cost is one extra output-sized BF16 tensor.
\item \textbf{Forward staging of the shared bias:} The forward kernel loads each bias tile with the Tensor Memory Accelerator alongside the key and value tiles, reads it directly in the score-accumulator register layout, prefetches the key mask, and evaluates probabilities in 16-key chunks just before value aggregation, without changing the floating-point operations or their order.
\end{enumerate}

We evaluate TurboPairFormer on H100 GPUs at crop sizes of 384, 640, and 768 and head dimensions $D=16$ and $32$. Compared with the other methods evaluated in this paper, TurboPairFormer produces more accurate query, key, and pair-bias gradients, as measured by mean RMSE against an FP64 reference. All four gradients are bitwise identical across five repeated calls in all 600 fixed-input cases (Appendix~\ref{sec:tpf-supp-replay}), whereas cuEquivariance, OpenFold3 Triton, and DeepSpeed-Evo pass none; the fastest non-deterministic variants of our kernel take 2.8--4.1\% less backward time at $D=32$ and 11.0--13.7\% less at $D=16$ (Appendix~\ref{tpfapp:determinism}). One TurboPairFormer attention call takes 35.6--45.6\% less forward-plus-backward time than cuEquivariance~\citep{nvidia2026cuequivariance} and ranges from 7.4\% faster to 3.6\% slower than Protenix (Appendix~\ref{tpfapp:operator-attention}); the forward changes cut forward time by 25.5--27.5\% at $D=32$ (Appendix~\ref{tpfapp:forward-ablation}). Combined with our triangle multiplication kernels, TurboPairFormer has the lowest GPU computation time per step among eight OpenFold3 backend configurations on 16 H100 GPUs, $1.73\times$ faster than OpenFold3 Triton and $1.13\times$ faster than cuEquivariance at crop 768. In paired 400-update continuations from a shared step-1000 checkpoint, the losses of runs using an earlier build of our kernels remain finite, whereas OpenFold3 Triton produces NaN losses within 10 updates (Section~\ref{sec:stability}).

\section{TurboPairFormer}
\label{sec:method}

TurboPairFormer implements triangular attention and triangle multiplication for OpenFold3 pair updates (TriMul details in Appendix~\ref{tpfapp:trimul}). The attention kernel combines one probability recomputation, fixed accumulation orders, and output-residual compensation. Figure~\ref{fig:partition} and Algorithm~\ref{alg:tpf_backward} summarize the backward pass; Table~\ref{tpfapp:tab:notation} defines the notation.

\subsection{Triangular Attention and Its Reduction Axes}
\label{sec:problem}

\looseness=-1
OpenFold3 learns features $Z$ for each ordered token pair. Standard amino acids and nucleotides contribute one token each; other components use one token per heavy atom~\citep{abramson2024alphafold3}. A crop selects $N$ tokens, possibly across chains. Starting-node triangular attention fixes a token index $o$, called the \emph{outer index}, and takes row $o$ of $Z$ as one slice. Within the slice, the token indices $q$ and $k$ mark the query and key positions (worked example in Figure~\ref{tpfapp:fig:worked-example}, Appendix~\ref{tpfapp:worked-example}). The outer index identifies a token, not a chain or geometric plane. Ending-node attention uses the same core after permuting the pair axes (Appendix~\ref{tpfapp:outer}).

For one slice and one head, omitting the batch index, linear projections of the slice give query, key, and value matrices $Q,K,V\in\mathbb R^{N\times D}$ with one row per pair, where $D$ is the head dimension: row $q$ of $Q$ comes from the pair $(o,q)$, and row $k$ of $K$ and $V$ from the pair $(o,k)$. The pair bias $\mathrm{bias}\in\mathbb R^{N\times N}$, whose entry $(q,k)$ is projected from the pair $(q,k)$, is shared by all $N$ slices. With additive mask $M$, the attention core computes
\begin{equation}
S=\frac{QK^\top}{\sqrt{D}}+\mathrm{bias}+M,\qquad
P=\operatorname{softmax}(S),\qquad
O=PV,
\label{eq:tpf_forward}
\end{equation}
where the softmax acts on each row. Given the incoming gradient $dO$, the softmax backward forms the probability gradient $dP$ and the score gradient $dS$, which with $P$ give $dQ$, $dK$, and $dV$:
\begin{equation}
\begin{aligned}
dP&=dO\,V^\top, & dS&=P\circ(dP-\delta),\\
dQ&=\frac{dS\,K}{\sqrt{D}}, & dK&=\frac{dS^\top Q}{\sqrt{D}},\\
dV&=P^\top dO, & \delta&=\operatorname{rowsum}(dO\circ O),
\end{aligned}
\label{eq:dbias}
\end{equation}
where $\delta$ is the row correction, one value per query row. Because every slice adds the same bias, the fourth gradient, $d\,\mathrm{bias}$, is the sum of $dS$ over all $N$ slices. These gradients reuse the same intermediates but sum along different axes: keys for $dQ$, queries for $dK/dV$, and outer indices for $d\,\mathrm{bias}$. This determines which results a thread block can finish locally and which require a later reduction. Appendix~\ref{tpfapp:gradients} derives the identities.

\subsection{One Recomputation for Four Gradients}
\label{sec:single}

Each thread block, or cooperative thread array (CTA), owns one key tile and a group of four outer indices, called an outer group, for one example and head. It traverses query tiles in ascending order, using 64 keys and 32 query rows per tile. Different key tiles provide independent work. Within each owned slice, $dK/dV$ accumulators stay in FP32 registers throughout the traversal (Figure~\ref{fig:partition}). For each query tile, the thread block recomputes $P=\exp(S-\mathrm{LSE})$ from the scores and saved row log-sum-exp. It forms $dP$ and $dS$ as in Equation~(\ref{eq:dbias}), with the compensated row correction $\widehat\delta$ of Section~\ref{sec:delta} in place of $\delta$. The tile $P$ supplies $dV$, while $dS$ supplies $dK$, $dQ$, and the pair-bias gradient. Shared memory makes these intermediates available to their consumers before the next tile. Each thread block writes its query contributions as partials indexed by key tile and its bias contributions as partials indexed by outer group. Every partial has one writer. Separate reduction kernels sum these BF16 partials in FP32, without recomputing probabilities or storing the full score gradient. Proposition~\ref{tpfapp:prop:partition} establishes the partitioning. Algorithm~\ref{alg:tpf_backward} omits batch and head indices; inside the slice loop, $Q$, $K$, $V$, $dO$, $M$, $\mathrm{LSE}$, $\widehat\delta$, $dK$, and $dV$ denote the current tiles of the current slice. Row statistics and the key mask broadcast along the other tile axis; fully masked rows and invalid outer positions have $P=0$. All matrix products use BF16 operands and FP32 accumulation, so $P$ and $dS$ are rounded when consumed by a matrix product; the bias sum instead uses FP32 $dS$. The main kernel uses two warpgroups (256 threads), WGMMA for $S$ and $dP$, and \texttt{mma.sync} for the gradient products\footnote{NVIDIA \href{https://docs.nvidia.com/cuda/archive/12.9.1/parallel-thread-execution/index.html}{PTX ISA 8.8}, CUDA 12.9.1 archive (2025): \href{https://docs.nvidia.com/cuda/archive/12.9.1/parallel-thread-execution/index.html\#asynchronous-warpgroup-level-matrix-instructions-wgmma-mma}{\texttt{wgmma.mma\_async}} and \href{https://docs.nvidia.com/cuda/archive/12.9.1/parallel-thread-execution/index.html\#warp-level-matrix-instructions-mma}{\texttt{mma.sync}}.}. Appendix~\ref{tpfapp:algorithms} gives the thread-block view with explicit slice and tile indices, including the fixed addition tree over the four slices, synchronization, and instruction order.

\begin{figure}[!t]
\centering
\includegraphics[width=\linewidth]{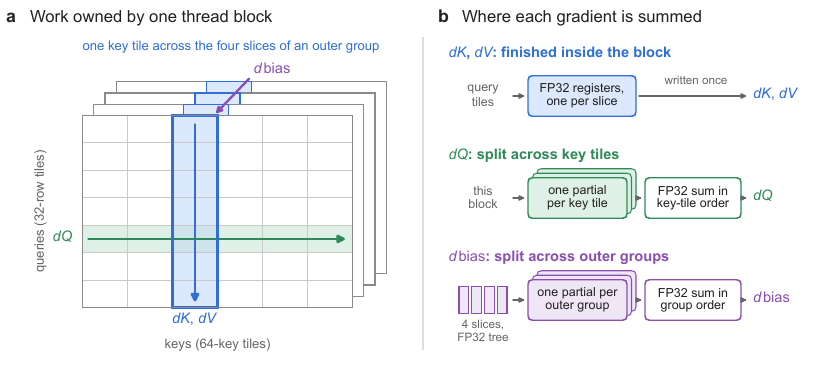}
\caption{\textbf{Work partitioning of the backward kernel.} (a)~A thread block owns one key tile in each of the four slices of an outer group and visits every query tile. (b)~$dK$ and $dV$ finish inside the block; $dQ$ and $d\,\mathrm{bias}$ leave as partials with one writer each, which separate kernels add in a fixed order.}
\label{fig:partition}
\end{figure}

\begin{algorithm}[!htbp]
\caption{\textsc{TurboPairFormer} backward pass}
\label{alg:tpf_backward}
\small
\algrenewcommand\algorithmicindent{1.0em}
\begin{algorithmic}[1]
\Require For each slice: $Q,K,V,dO\in\mathbb R^{N\times D}$, mask $M$, saved $\mathrm{LSE}\in\mathbb R^{N}$, and the saved output and residual (Section~\ref{sec:delta}). The shared $\mathrm{bias}\in\mathbb R^{N\times N}$. All in HBM.
\State In a preprocessing kernel, compute the compensated row correction $\widehat\delta\in\mathbb R^{N}$ of every slice in FP32.
\State Divide the keys into tiles of 64, the queries into tiles of 32, and the outer indices into outer groups of four.
\For{each key tile and outer group, in parallel}
  \State Load $K$, $V$, and $M$ of the key tile for the four slices from HBM to on-chip memory.
  \State On chip, initialize $dK=dV=(0)\in\mathbb R^{64\times D}$ in FP32 for each of the four slices.
  \For{each query tile, in ascending order}
    \State Load $Q$, $dO$, $\mathrm{LSE}$, and $\widehat\delta$ for the four slices, and the bias tile, to on-chip memory.
    \For{each of the four slices}
      \State On chip, compute $S=QK^\top/\sqrt D+\mathrm{bias}+M$ and recompute $P=\exp(S-\mathrm{LSE})$.
      \State On chip, compute $dP=dO\,V^\top$ and $dS=P\circ(dP-\widehat\delta)\in\mathbb R^{32\times64}$.
      \State On chip, update $dV\gets dV+P^\top dO$ and $dK\gets dK+dS^\top Q$.
      \State Write the $dQ$ partial $dS\,K\in\mathbb R^{32\times D}$ of this key tile to HBM.
    \EndFor
    \State Add the four $dS$ in a fixed FP32 order, and write the $d\,\mathrm{bias}$ partial of this outer group to HBM.
  \EndFor
  \State Write $dK/\sqrt D$ and $dV$ of the four slices to HBM.
\EndFor
\State In a reduction kernel, add the $dQ$ partials in ascending key-tile order in FP32, and scale by $1/\sqrt D$.
\State In a reduction kernel, add the $d\,\mathrm{bias}$ partials in ascending outer-group order in FP32.
\State \Return $dQ,dK,dV,d\,\mathrm{bias}$.
\end{algorithmic}
\end{algorithm}

\subsection{Deterministic Reductions and Workspace Scheduling}
\label{sec:leaves}
\label{sec:execution}

Partitioning work across key tiles and outer groups exposes parallelism, but distributes contributions to $dQ$ and the shared pair-bias gradient across thread blocks. We fix their floating-point addition order to make these reductions reproducible. TurboPairFormer separates the production of these contributions from their reduction: each block writes its partials to unique addresses, and a later kernel consumes them in a prescribed order. The addition sequence is therefore independent of block completion order. For each $dQ$ element, the reducer adds BF16 partials sequentially in ascending key-tile order, using an FP32 accumulator. The pair-bias reduction follows the same principle along the outer axis: each outer group first forms one BF16 partial through a fixed FP32 addition tree, and the reducer adds these partials in ascending outer-group order. The $dK/dV$ accumulations remain local to their owning block and follow ascending query-tile order. Final scaling and BF16 output conversion occur after the respective reductions. These fixed paths need no global floating-point atomics. The workspace planner changes how much work is resident in each launch while preserving these arithmetic paths. It partitions examples, query rows, and outer groups into consecutive chunks. After reducing a chunk of bias partials, the next launch reloads the FP32 running sum and continues with the next outer group. If query traversal spans launches, $dK/dV$ likewise resume their FP32 accumulators at the next query tile. A launch boundary thus inserts only a store and reload; it introduces neither a new subtotal nor an extra rounding step. For fixed inputs, binary, device, tile sizes, and outer grouping, supported workspace schedules consequently produce bitwise-identical gradients. Appendix~\ref{tpfapp:invariance} proves this property; Algorithm~\ref{tpfapp:alg:leaf} specifies the launch boundaries. In FlashAttention-3's deterministic backward, semaphores order in-place $dQ$ updates~\citep{shah2024flashattention3,qiang2026dash}. TurboPairFormer orders the later reductions, allowing key-tile blocks to write their partials independently. This approach also covers the shared pair-bias gradient. Appendix~\ref{tpfapp:determinism} compares these designs with FlashAttention-2 and DASH. \label{sec:workspace}
The cost is storage for the partials and an additional pass to read them. Reusing the pair-bias buffers limits their resident size, but the current implementation retains all $dQ$ partials, one per key tile, so their size grows with the cube of the crop size $N$. Workspace scheduling does not remove this cubic allocation. Whether it increases peak training memory depends on which other tensors are live during the reduction; Appendix~\ref{tpfapp:memory} gives the allocation sizes and lifetime analysis.

\subsection{Preserving Precision with an Output Residual}
\label{sec:delta}

The row correction $\delta$ in Equation~(\ref{eq:dbias}) comes from the forward output without another probability traversal. Computing it from the saved BF16 output loses information discarded during output conversion. Following analyses of attention precision and multiple-component BF16 representations~\citep{qiu2026lowprecision,henry2019leveraging}, we save a BF16 approximation to the difference between the FP32 forward result $O_{32}$ and the public output $O_{16}$:
\begin{equation}
E_{16}=\operatorname{RN}_{16}(O_{32}-O_{16}),\qquad
\widehat\delta\approx\operatorname{rowsum}\bigl(dO\circ(O_{16}+E_{16})\bigr).
\label{eq:tpf_residual}
\end{equation}
Here $\operatorname{RN}_{16}$ rounds to BF16. The subtraction, reconstruction, and dot product execute in FP32; the approximation sign accounts for floating-point evaluation. In our controlled ablations, using $\widehat\delta$ in the shared $dS$ reduces the mean RMSE of $dQ$, $dK$, and the pair-bias gradient, while leaving the public output and $dV$ bitwise unchanged (Section~\ref{sec:accuracy_determinism}). The cost is one output-sized BF16 tensor and its memory traffic, without additional probability recomputation. Appendix~\ref{tpfapp:residual} specifies the evaluation and analyzes its precision.

\paragraph{Forward pipeline.}
\label{sec:forward}
The forward kernel employs tiled online softmax~\citep{milakov2018onlinesoftmax}, using double-buffered shared memory to overlap Tensor Memory Accelerator\footnote{NVIDIA \href{https://docs.nvidia.com/cuda/archive/12.9.1/hopper-tuning-guide/index.html\#tensor-memory-accelerator}{Hopper Tuning Guide, Section 1.4.1.2: Tensor Memory Accelerator}, CUDA 12.9.1 archive (2025).} loads of keys, values, and pair bias with Tensor Core compute. Probability evaluation is interleaved with value aggregation, and the epilogue saves the rounding residual alongside the BF16 output (Algorithm~\ref{tpfapp:alg:forward}; details in Appendix~\ref{tpfapp:forward}).

\FloatBarrier
\section{Experiments}
\label{sec:eval}

We evaluate TurboPairFormer through training performance, backward numerical accuracy against an FP64 reference, training stability, and fine-tuning validation; supplementary hardware measurements, additional accuracy results, component ablations, deterministic replay, and validation details appear in the appendix.

\subsection{Training Performance}
\label{sec:performance}

We compare TurboPairFormer, Protenix, cuEquivariance, TriFast, MegaFold, DeepSpeed-Evo, ScaleFold, and OpenFold3 Triton at crops 384, 640, and 768 on two nodes with eight NVIDIA H100 GPUs each and 16 samples per optimizer step. Each backend runs one 200-step training trajectory per crop. We measure GPU computation time with CUDA events, taking the maximum across ranks at each step and averaging over steps 21--200. Timings exclude data loading, host-to-device transfer, and EMA updates. \textbf{TurboPairFormer achieves the lowest measured mean GPU computation time at all three crops}: 5.26, 9.40, and 12.58 seconds per optimizer step, respectively (Figure~\ref{fig:tpf-training-performance}). These times are 17.4--42.1\% lower than OpenFold3 Triton's and 6.1--11.7\% lower than cuEquivariance's. Appendices~\ref{tpfapp:h200-training}, \ref{tpfapp:trimul}, and~\ref{sec:tpf-supp-stability} report supplementary training measurements on eight H200 GPUs, component ablations, and training-loss comparisons, respectively.

\begin{figure}[!htbp]
\centering
\includegraphics[width=0.94\linewidth]{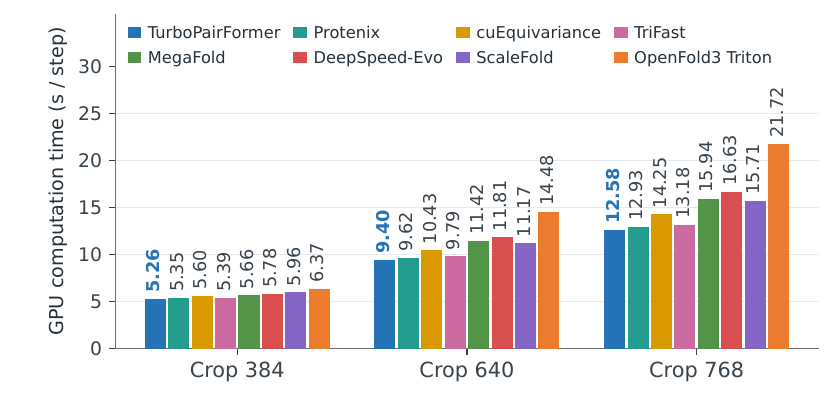}
\caption{\textbf{OpenFold3 training performance on 16$\times$H100 GPUs.} Mean GPU computation time per optimizer step on two nodes with eight H100 GPUs each and 16 samples per step. Statistics cover steps 21--200; data loading, host-to-device transfer, and EMA updates are excluded. Lower is better. TurboPairFormer has the lowest measured mean at all three crop sizes.}
\label{fig:tpf-training-performance}
\label{fig:tpf-h100-training}
\end{figure}

\subsection{Backward Numerical Accuracy}
\label{sec:accuracy_determinism}

On one H100, we compare eight attention implementations at crops $N\in\{384,640,768\}$ and head dimensions $D\in\{16,32\}$, using ten seeds per shape. All methods receive the same BF16 inputs and incoming gradient, with zero pair bias and mask. Figure~\ref{fig:tpf-backward-accuracy} reports the mean and sample standard deviation of absolute RMSE for the four public BF16 gradients against an FP64 reference evaluated on the same quantized inputs. TurboPairFormer has the lowest mean RMSE for $dQ$, $dK$, and $d\,\mathrm{bias}$ in all six shapes, reducing errors relative to cuEquivariance by 31.4--44.4\%, 28.7--42.4\%, and 16.9--37.9\%, respectively. Its $dV$ error is essentially tied with that of cuEquivariance. Enabling residual compensation lowers the $dQ$, $dK$, and $d\,\mathrm{bias}$ errors in every paired case, by 36.3--47.1\%, 28.7--42.4\%, and 28.3--43.1\% in mean RMSE, while the output and $dV$ stay bitwise identical (Figure~\ref{tpfapp:fig:residual-on-off}; Appendix~\ref{sec:tpf-supp-residual} adds captured-input checks). Appendix~\ref{sec:tpf-supp-replay} reports five-repeat gradient determinism across 600 fixed inputs.

\begin{figure}[!htbp]
\centering
\includegraphics[width=0.94\linewidth]{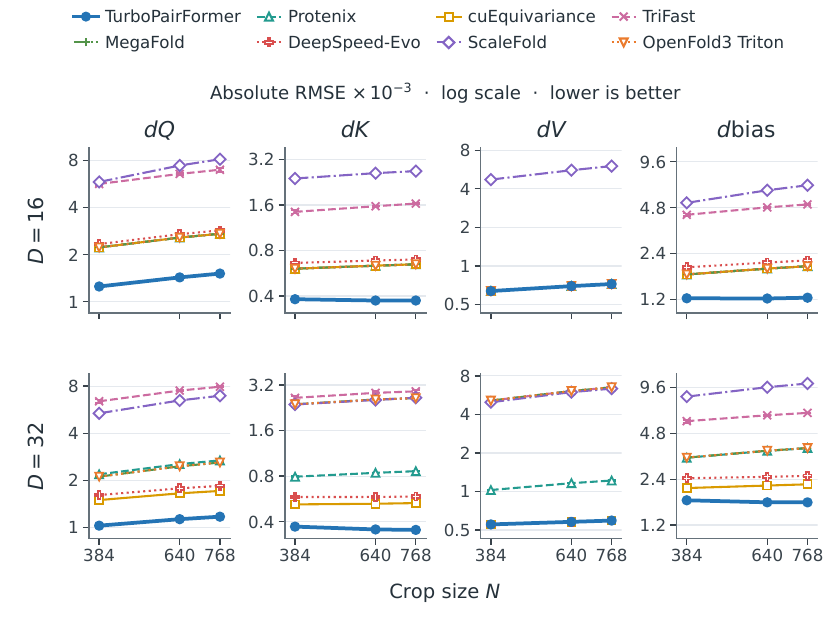}
\caption{\textbf{Attention backward accuracy against FP64.} Rows show $D=16$ and $D=32$; columns show $dQ$, $dK$, $dV$, and $d\,\mathrm{bias}$. Points and error bars report mean absolute RMSE and sample standard deviation over ten seeds. Logarithmic axes show RMSE in units of $10^{-3}$. Implementation variants and additional results are described in Appendix~\ref{sec:tpf-supp-protocol}.}
\label{fig:tpf-backward-accuracy}
\end{figure}

\subsection{Training Stability}
\label{sec:stability}

\textbf{Post-warmup training setup.} Starting from OpenFold3 Preview2 weights, we first train with cuEquivariance at crop 640 for 1,000 optimizer updates, following OpenFold3's default linear learning-rate warmup from zero to a configured peak of $1.8\times10^{-3}$. Five backend configurations then resume this shared checkpoint, including its optimizer and scheduler states, on one H100 with BF16 mixed precision and one sample per update. Each run uses seed 42 and is scheduled for 400 further updates at crop 384, 640, or 768; recorded sample order matches across configurations within each crop. TurboPairFormer and cuEquivariance each supply both attention and triangle multiplication (TriMul), while TriFast and MegaFold attention use cuEquivariance TriMul. Unlike Section~\ref{sec:performance}, OpenFold3 Triton attention here uses PyTorch TriMul. These runs use archived backend builds; Appendix~\ref{tpfapp:stability-protocol} specifies their versions, data, and restoration protocol. \textbf{Results.} OpenFold3 Triton first records NaN loss at resumed updates 7, 10, and 8 for crops 384, 640, and 768, respectively (Figure~\ref{fig:stability} in Appendix~\ref{sec:tpf-supp-stability}). MegaFold completes 400 updates at each crop but reaches a raw loss of 53.9 at crop 768. TurboPairFormer, cuEquivariance, and TriFast remain finite throughout the 400-update interval. TurboPairFormer's mean raw training losses are 1.640, 1.603, and 1.603, compared with 1.646, 1.608, and 1.598 for cuEquivariance. Each configuration has one trajectory per crop, so these observations characterize stability after warmup under this shared-checkpoint protocol; they do not establish convergence or final model quality.


\subsection{Fine-Tuning Validation}
\label{sec:tpf-validation}

\begingroup
\setlength{\intextsep}{6pt plus 2pt minus 2pt}
\begin{figure}[!htbp]
\centering
\includegraphics[width=0.94\linewidth,trim=0 6 0 5,clip]{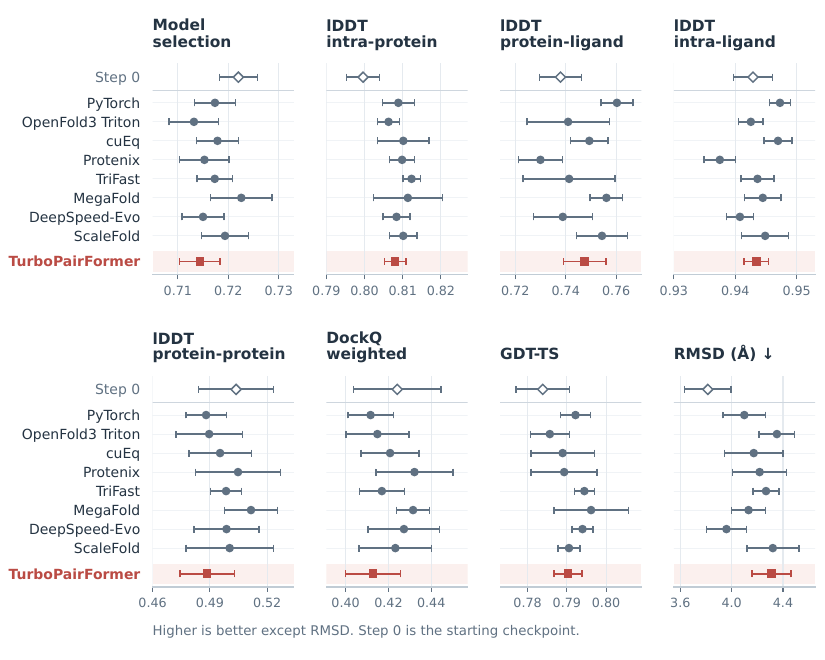}
\caption{\textbf{Preview2 validation after fine-tuning on 16 H100 GPUs.} Points show means and sample SDs over five evaluation seeds on 85 kinase validation complexes after 1,000 global training steps, with step 0 as a reference. Structure metrics average top-1 and best-of-five values; model selection combines lDDT and RASA means (Appendix~\ref{sec:tpf-validation-details}). Higher is better except for RMSD.}
\label{fig:tpf-validation-two-node}
\end{figure}
\endgroup

\textbf{Training setup.} We fine-tune OpenFold3 Preview2~\citep{openfold3preview2} on 9,297 kinase PDB complexes for 1,000 global steps with each of the eight backend configurations of Section~\ref{sec:performance} and with the stock PyTorch configuration, on two nodes with eight H100 GPUs each (settings in Table~\ref{tpfapp:tab:training-settings}). Each final EMA checkpoint and the Preview2 starting point (step 0) are evaluated on the same 85-complex kinase validation cohort using five evaluation seeds (0--4), which fix the diffusion noise of every target. \textbf{Validation results.} The model-selection means of the nine fine-tuned checkpoints span 0.713--0.723, with a pooled standard deviation of 0.0045 over evaluation seeds (Figure~\ref{fig:tpf-validation-two-node}). In paired comparisons over targets, TurboPairFormer's score differs from those of seven other fine-tuned configurations by $-0.005$ to $+0.001$, with 95\% bootstrap intervals that include zero. Its score is 0.0082 below MegaFold's, with an interval of $[-0.0169,-0.0002]$ (Appendix~\ref{sec:tpf-validation-details}). The measured model-selection scores are broadly similar, with a small advantage for MegaFold in this paired comparison; these results do not establish equivalence across backend configurations.

\section{Related Work}
\label{sec:related}

\paragraph{Triangular attention and kernel dataflow.}
AlphaFold2 and AlphaFold3 use triangular updates to pair representations~\citep{jumper2021alphafold2,abramson2024alphafold3}; FastFold, ScaleFold, and MegaFold accelerate these models through kernel and system optimizations~\citep{cheng2022fastfold,zhu2024scalefold,la2025megafold}. FlashAttention and its successors establish tiled recomputation and hardware-specific pipelines~\citep{dao2022flashattention,dao2023flashattention2,shah2024flashattention3,zadouri2026flashattention4}. NeuralPLexer3 avoids explicit bias replication~\citep{qiao2026neuralplexer3}, FlexAttention compiles PyTorch score modifications~\citep{dong2025flexattention,pytorch}, and FlashBias factorizes some structured biases exactly and approximates general biases~\citep{wu2025flashbias}. TurboPairFormer retains the dense shared bias. The inspected DeepSpeed-Evo and ScaleFold kernels already recompute probabilities once, but combine bias gradients through atomic accumulation and full score-gradient storage, respectively~\citep{song2023deepspeed4science,zhu2024scalefold}. Our distinction is combining key-tile parallelism with ordered query and pair-bias partials (Appendix~\ref{tpfapp:comparison}). \paragraph{Deterministic gradient reduction.}
Order-sensitive floating-point summation~\citep{higham1993summation} motivates either order-independent accumulators~\citep{ahrens2020reproducible,collange2015reproducible} or prescribed reduction sequences. FlashAttention-2's deterministic mode uses FP32 query-gradient buffers per split (Appendix~\ref{tpfapp:mechanisms}); FlashAttention-3 and FlashAttention-4 order in-place updates with semaphores~\citep{qiang2026dash,zadouri2026flashattention4}, and DASH improves the scheduling of these ordered updates~\citep{qiang2026dash}. TurboPairFormer materializes partials for both query and shared pair-bias gradients, allowing key-tile blocks to write independently before fixed FP32 reductions. This avoids waiting for accumulation turns, but requires partial storage; the query-partial buffer remains cubic in crop length (Appendices~\ref{tpfapp:memory} and~\ref{tpfapp:determinism}). 
\paragraph{Precision of the backward correction.}
Mixed-precision training allocates precision selectively~\citep{micikevicius2018mixed}, while multi-component representations retain rounding information in BF16 arithmetic and training states~\citep{henry2019leveraging,yu2024collage}. \citet{qiu2026lowprecision} connect attention-output error to the backward correction $\delta=\langle dO,O\rangle$ and adjust the softmax shift to mitigate biased rounding. Attn-QAT computes an auxiliary output from unquantized probabilities for this correction in FP4 attention~\citep{zhang2026attnqat}. TurboPairFormer instead extracts a BF16 rounding residual from the existing FP32 output accumulator and reconstructs the saved output for $\delta$. This preserves the public BF16 output and avoids another $PV$ product; it targets output-conversion error, while other rounding errors remain (Appendix~\ref{tpfapp:residual}).

\section{Conclusion}
\label{sec:conclusion}

TurboPairFormer combines one probability recomputation for all four triangular-attention gradients, fixed-order partial reductions, output-residual compensation, and shared-bias staging on NVIDIA Hopper GPUs. Under the tested conditions, it achieves bitwise gradient repeatability and the lowest mean query, key, and pair-bias gradient RMSE against an FP64 reference among eight attention implementations. Integrated with our triangle multiplication kernels, it achieves the lowest measured GPU computation time per optimizer step among eight OpenFold3 backend configurations on 16 H100 GPUs. Relative to our deterministic implementation, the fastest tested non-deterministic variants take 2.8--4.1\% less backward time at the Pairformer head dimension $D=32$ and 11.0--13.7\% less at $D=16$. The query-partial buffer grows cubically with crop size; reducing this storage while preserving a fixed accumulation order is the main direction for future work.
\section*{Acknowledgments}
\label{sec:acknowledgments}
We thank Ravali Reddy, Thomas Bordes, Paul Zhao, Niranjan Hira, Stephen Balaban, Michael Balaban, and Dave Ward from Lambda, as well as Mallory Tollefson from the Open Molecular Software Foundation, for their support throughout this project.
\bibliography{sections/ref}
\bibliographystyle{preprint}

\appendix
\providecommand{\Require}{\REQUIRE}
\providecommand{\Ensure}{\ENSURE}
\providecommand{\State}{\STATE}
\providecommand{\For}{\FOR}
\providecommand{\EndFor}{\ENDFOR}
\providecommand{\If}{\IF}
\providecommand{\Else}{\ELSE}
\providecommand{\EndIf}{\ENDIF}
\DeclareRobustCommand{\chg}[1]{#1}

\appendix
\raggedbottom
\newtheorem{tpfproposition}{Proposition}[section]
\numberwithin{equation}{section}

\section*{Appendix}

This appendix gives the technical details, proofs, and additional experiments that support the main text. It is organized as follows.
 Appendix~\ref{tpfapp:notation} lists the notation.
 Appendix~\ref{tpfapp:outer} explains how pair features form slices that share one pair bias, with a worked example.
 Appendix~\ref{tpfapp:forward} describes the forward pipeline: online softmax, staging, and the saved output residual.
 Appendix~\ref{tpfapp:backward} derives the backward gradient identities, describes the key-tile partitioning and ordered accumulation, and proves that the result does not depend on the workspace schedule. It also bounds the error of the output-residual compensation.
 Appendix~\ref{tpfapp:memory} analyzes the workspace and its effect on peak training memory.
 Appendix~\ref{tpfapp:comparison} compares our kernels with existing attention kernels and their recomputation and accumulation strategies.
 Appendix~\ref{tpfapp:measurement} gives additional end-to-end results: step latency on eight H200 GPUs, training loss, numerical accuracy, deterministic replay, and sixteen-GPU fine-tuning validation.
 Appendix~\ref{tpfapp:operator} reports single-GPU operator measurements for triangular attention and triangle multiplication, plus a forward dataflow ablation.
 Appendix~\ref{tpfapp:determinism} explains which gradients need a cross-block reduction, compares ways to make such reductions deterministic, and measures their cost.
 Appendix~\ref{tpfapp:trimul} presents the design of the triangle multiplication kernel and a component ablation.

\section{Notation}
\label{tpfapp:notation}

Table~\ref{tpfapp:tab:notation} defines the variables used in the method and appendix. A colon in a tensor subscript selects all entries along that axis, and $\top$ denotes transpose. Parenthesized superscripts identify key tiles or outer groups rather than powers.

\begingroup
\small
\renewcommand{\arraystretch}{1.08}
\setlength{\arrayrulewidth}{0.3pt}
\setlength{\tabcolsep}{6pt}
\setlength{\LTleft}{0pt}
\setlength{\LTright}{0pt}
\setlength{\LTcapwidth}{\linewidth}
\begin{longtable}{|>{\centering\arraybackslash}p{\dimexpr0.30\linewidth-2\tabcolsep-1.5\arrayrulewidth\relax}|>{\raggedright\arraybackslash}p{\dimexpr0.70\linewidth-2\tabcolsep-1.5\arrayrulewidth\relax}|}
\caption{Symbols used in the method and appendix. Tensor shapes are logical; batch and head indices are omitted where stated.}
\label{tpfapp:tab:notation}\\
\hline
\textbf{Notation} & \textbf{Meaning} \\
\hline
\endfirsthead
\hline
\textbf{Notation} & \textbf{Meaning} \\
\hline
\endhead
\endfoot
\endlastfoot
$B$, $N$, $N_o$ & Examples per GPU, tokens per crop, and number of fixed-token attention slices. The label $o$ in $N_o$ denotes the outer axis; $N_o=N$. \\ \hline
$H$, $D$, $C_z$ & Heads, features per head, and features per pair; the label $z$ refers to the pair tensor $Z$. \\ \hline
$o,q,k$; $i,j$ & Token indices: the fixed token (outer index), and the query and key positions within its slice. The triangle example uses $i=o$ and $j=q$. A subscript $o$ alone selects one fixed-token slice. \\ \hline
$d$; $t,\ell,c$ & Feature coordinate; key tile, outer group, and forward 16-key chunk indices. There are four chunks per forward tile. In the memory analysis, $t$ denotes an execution point. \\ \hline
$Z$ & Pair features of shape $[B,N,N,C_z]$. $Z_{ij:}$ selects one pair, and $Z_{o,:}$ (also written $Z_{o,:,:}$) the slice of a fixed token $o$; numeric subscripts such as $Z_{2,4,:}$ select individual pairs. \\ \hline
$Q,K,V,O$ & Query, key, value, and attention output tensors, each of shape $[B,N_o,H,N,D]$; for one example, slice, and head, each is an $N\times D$ matrix. \\ \hline
$\mathrm{bias}$, $M$ & Shared pair bias of shape $[B,H,N,N]$ and additive mask. $M_{oqk}$ selects a score mask; the key-only form is $M_{ik}$. \\ \hline
$T,S,P$ & Unscaled query--key scores, scores after scaling and adding bias and mask, and softmax probabilities. Their tensor indices follow $o,q,k$. \\ \hline
$X,dX$ & Any tensor and its loss gradient, including $dZ$. The prefix $d$ means gradient; a subscript $d$ means feature coordinate. Local $dK$ accumulators are unscaled until the final store. \\ \hline
$\mathrm{LSE}$, $\delta$ & Saved log-sum-exp and softmax backward correction for each row. Subscript $oq$ selects query row $q$ in fixed-token slice $o$. \\ \hline
$\tau_q,\tau_k$ & Query and key tile sizes. Superscript $\mathrm f$ denotes forward: $\tau_q^{\mathrm f}=\tau_k^{\mathrm f}=64$. Backward uses $\tau_q=32$, $\tau_k=64$. Here $q,k$ label the tile axes. \\ \hline
$\mathcal K_t$, $J_K$, $J_Q$ & Keys in backward tile $t$, and numbers of backward key and query tiles; the labels $K$ and $Q$ refer to keys and queries. $\mathcal K_t^{\mathrm f}$ is the corresponding forward tile. \\ \hline
$\mathcal L_\ell$, $G$, $L$ & Fixed tokens in outer group $\ell$, tokens per group ($G=4$), and number of groups. Groups need not follow chain boundaries. \\ \hline
$o_0,o_1,o_2,o_3$ & Four fixed-token positions within one outer group. $dS_{o_0},\ldots,dS_{o_3}$ are their contributions to the same bias element; invalid positions contribute zero. \\ \hline
$B_c$, $Q_s$, $R$ & Examples per batch chunk, query rows per query chunk, and outer groups per launch. Here $c$ and $s$ label the two chunk sizes, not the chunk index $c$ used elsewhere. \\ \hline
$dQ_{\mathrm{part}}^{(t)}$ & Unscaled query-gradient partial from key tile $t$; $\mathrm{part}$ means partial. Added subscripts $oq:$ select its output row. The identities use exact partials; algorithms use their stored BF16 values. \\ \hline
$d\,\mathrm{bias}_{\mathrm{part}}^{(\ell)}$ & Bias-gradient partial from outer group $\ell$. Added subscripts $qk$ select one output element. The identities use exact partials; algorithms use their stored BF16 values. \\ \hline
$\widehat{d\,\mathrm{bias}}_{\mathrm{part}}^{(\ell)}$, $d\,\mathrm{bias}_{\mathrm{acc}}^{(\ell)}$ & The hat marks a stored BF16 bias partial. The label $\mathrm{acc}$ marks the FP32 sum through group $\ell$; state $0$ already contains group $0$'s partial. \\ \hline
$\mu$, $\lambda$ & Running row maximum and softmax normalizer. Subscripts $oq$ identify the row; a prime marks the value after the current key tile. \\ \hline
$\rho$, $\mathrm{acc}$ & Row rescaling factor and unnormalized FP32 output sum. Subscripts $oq$ or $oq:$ select a row scalar or vector; a prime marks the updated value. \\ \hline
$O_{32}$, $O_{16}$, $E_{16}$ & FP32 output before conversion, public BF16 output, and saved BF16 approximation of the output-rounding residual. Subscripts $32$ and $16$ label precision, not tensor coordinates; $16$ denotes BF16. \\ \hline
$\widetilde O_{32}$, $O^*$ & Output reconstructed from $O_{16}$ and $E_{16}$, and exact attention output. The tilde marks reconstruction; the star marks the exact result. \\ \hline
$x,y,e$; $\widehat e,\widetilde x$ & One FP32 output entry, its BF16 rounding, and the discarded residual. The hat marks the rounded BF16 residual; the tilde marks the reconstructed entry. \\ \hline
$u_{16}$, $u_{32}$ & BF16 and FP32 unit roundoffs under the stated normal-range assumptions; the subscripts label precision. \\ \hline
$\operatorname{RN}_{16}$, $\operatorname{RN}_{32}$, $\operatorname{FP32}(\cdot)$, $\sum^{\mathrm{FP32}}$ & Round-to-nearest conversion to BF16 and to FP32, exact conversion of a BF16 value to FP32, and a sum evaluated in FP32. \\ \hline
$\circ$, $\langle\cdot,\cdot\rangle$, $\operatorname{softmax}$, $\operatorname{diag}$, $\operatorname{rowsum}$, $\operatorname{rowmax}$, $X^\top$ & Elementwise product, inner product over the feature coordinate, softmax over the keys of each query row, the diagonal matrix of a vector, the sum and the maximum over each row, and the transpose. $S_c$, $V_{o,c}$, and $P_c$ restrict a tile to the keys of forward chunk $c$. \\ \hline
$\mathbb R$ & Real numbers; for example, one score tile lies in $\mathbb R^{\tau_q\times\tau_k}$. \\ \hline
$\Theta(\cdot)$, $\|\cdot\|_1$, $\|\cdot\|_2$ & Asymptotic order, and the sum of absolute values and the Euclidean norm over all entries. The relative $L_2$ error of a tensor $X$ against a reference $X'$ is $\|X-X'\|_2/\|X'\|_2$. \\ \hline
$\widehat\delta$, $\eta$ & Approximate row correction (marked by the hat) and its difference from the reference correction. $\eta_{oq}$ is one row's difference; $\eta_o$ collects the differences for slice $o$. \\ \hline
$\Delta dX$ & Change in a gradient when only the row correction changes; the prefix $\Delta$ marks the difference. \\ \hline
$W_Q$ & Bytes for all BF16 query-gradient partials; $Q$ labels the query gradient. \\ \hline
$W_{\mathrm{bias,all}}$, $W_{\mathrm{bias,res}}$ & Bytes for all BF16 bias partials or only the resident subset. The labels $\mathrm{all}$ and $\mathrm{res}$ mean all and resident. \\ \hline
$W_E$, $W_{dS}$ & Bytes for one attention call's BF16 output residual and for a complete FP32 score gradient; $E$ and $dS$ label these tensors. \\ \hline
$A(t)$, $W(t)$; $P_0$, $P_1$ & Bytes of other live tensors and of temporary operator workspace at execution point $t$; peak memory without and with this workspace. Subscripts $0,1$ label the two cases; these are not attention probabilities. \\ \hline
\end{longtable}
\endgroup

\section{Biomolecular Inputs and Tensor Organization}
\label{tpfapp:outer}

\subsection{Tokens, Pairs, and the Outer Index}

AlphaFold3 assigns one token to each standard polymer residue and one to each heavy atom of other components~\citep{abramson2024alphafold3}, and a crop selects $N$ tokens, which can include noncontiguous residues from several chains. The pair representation of one example is $Z\in\mathbb R^{N\times N\times C_z}$, where $Z_{ij:}$ is a learned feature vector for the ordered token pair $(i,j)$. These features encode pairwise information rather than coordinates or chemical bonds, and masks decide which pairs can attend. Starting-node triangular attention fixes the first pair index, the \emph{outer index} $o$: the slice $Z_{o,:}$ holds the $N$ pairs between token $o$ and every token, and attention runs along its second index, so a crop has $N$ slices, each with an $N\times N$ score matrix per head. Ending-node attention fixes the second index and uses the same core after permuting the pair axes~\citep{openfold3preview}. Section~\ref{tpfapp:worked-example} follows one score through a seven-token example.

\subsection{Shared Pair Bias and Its Gradient}

For starting-node attention, suppress batch and head indices and let the projections act on normalized pair features. A score for updating pair $(i,j)$ using pair $(i,k)$ has the form
\begin{equation}
S_{ijk}=\frac{1}{\sqrt{D}}\langle Q_{ij:},K_{ik:}\rangle+\mathrm{bias}_{jk}+M_{ik}.
\label{tpfapp:eq:triangle}
\end{equation}
The score connects the pair relations $(i,j)$, $(i,k)$, and $(j,k)$, which form the triangle. Here $i$ is the outer index and $j,k$ are the query and key indices. The same $\mathrm{bias}_{jk}$ is used for every $i$, so by the chain rule
\begin{equation}
d\,\mathrm{bias}_{jk}=\sum_i dS_{ijk}.
\label{tpfapp:eq:bias-broadcast}
\end{equation}
The forward pass evaluates outer slices independently, whereas the shared bias gradient couples them through this sum.

\subsection{A Worked Example}
\label{tpfapp:worked-example}

Figure~\ref{tpfapp:fig:worked-example} follows one attention score through a crop of seven tokens. Steps 1--4 trace the forward pass of query row $q=4$ in slice $o=2$, and step 5 shows the three backward sums that the gradient of the same score joins.

\begin{figure}[!t]
\centering
\includegraphics[width=\linewidth]{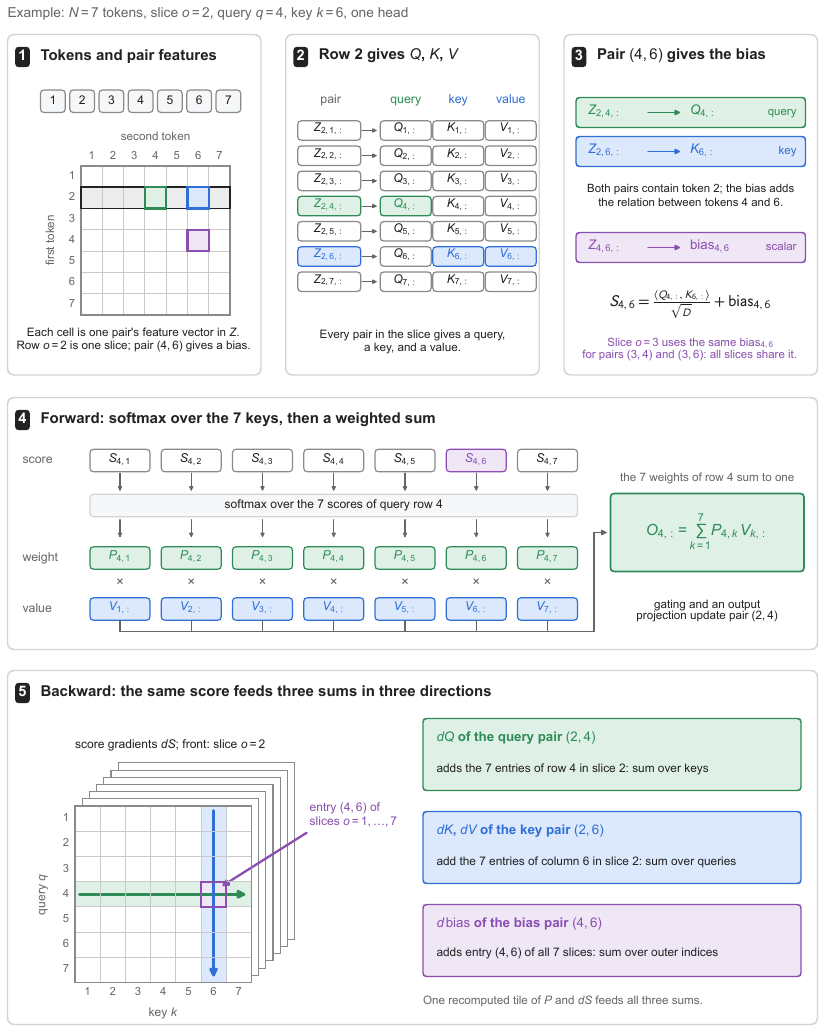}
\caption{\textbf{Worked example of triangular attention.} We follow one attention score through a crop of seven tokens with one head; numbers in subscripts are token positions. (1)~$Z$ stores one feature vector for every pair of tokens, and fixing token $o=2$ takes row 2 of $Z$ as one slice. (2)~Each pair in this row is projected into a query, a key, and a value; the pair $(2,4)$ gives the query $Q_{4,:}$, and the pair $(2,6)$ gives the key $K_{6,:}$ and the value $V_{6,:}$. (3)~Their score also adds a bias taken from the third pair, $(4,6)$, and every slice uses this same bias for its positions 4 and 6. (4)~The seven scores of query 4 go through a softmax, and the resulting weights average the seven values into the output that updates the pair $(2,4)$. (5)~In the backward pass, this one score enters three different sums: over keys for $dQ$, over queries for $dK$ and $dV$, and over slices for $d\,\mathrm{bias}$. Other queries and slices work the same way; masks and the normalization before the projections are omitted.}
\label{tpfapp:fig:worked-example}
\end{figure}

\section{Forward Pipeline}
\label{tpfapp:forward}

Algorithm~\ref{tpfapp:alg:forward} follows one query tile through the forward kernel.

\subsection{Online Softmax and Chunked Value Aggregation}
\label{tpfapp:forward-softmax}

The forward kernel evaluates Equation~(\ref{tpfapp:eq:triangle}) with tiled online softmax~\citep{milakov2018onlinesoftmax}, which avoids storing the full score and probability matrices~\citep{rabe2021selfattention,dao2022flashattention}. A thread block of four warps owns $\tau_q^{\mathrm f}=64$ query rows of one slice and head, and traverses key tiles of $\tau_k^{\mathrm f}=64$ keys in ascending order. After key tiles $0,\dots,t-1$, it carries for each query row a running maximum $\mu_{oq}$, a running normalizer $\lambda_{oq}$, and an unnormalized FP32 output $\mathrm{acc}_{oq:}$. With $\mathcal K^{\mathrm f}_t$ the keys of tile $t$, the update is
\begin{align}
\mu'_{oq}&=\max\Bigl(\mu_{oq},\max_{k\in\mathcal K^{\mathrm f}_t}S_{oqk}\Bigr),
&\rho_{oq}&=\exp\bigl(\mu_{oq}-\mu'_{oq}\bigr),\\
\lambda'_{oq}&=\rho_{oq}\lambda_{oq}
  +\sum_{k\in\mathcal K^{\mathrm f}_t}\exp\bigl(S_{oqk}-\mu'_{oq}\bigr),\\
\mathrm{acc}'_{oq:}&=\rho_{oq}\,\mathrm{acc}_{oq:}
  +\sum_{k\in\mathcal K^{\mathrm f}_t}
   \exp\bigl(S_{oqk}-\mu'_{oq}\bigr)V_{ok:},
\label{tpfapp:eq:online}
\end{align}
and after the last key tile the block emits
\begin{equation}
O_{32,oq:}=\frac{\mathrm{acc}_{oq:}}{\lambda_{oq}},\qquad
\mathrm{LSE}_{oq}=\mu_{oq}+\log\lambda_{oq}.
\label{tpfapp:eq:forward-out}
\end{equation}
A masked score of $-\infty$ receives zero probability, and a row without valid keys gets $O_{32}=0$ and $\mathrm{LSE}=-\infty$; the saved $\mathrm{LSE}$ serves the backward recomputation. Each score tile is consumed in ascending 16-key chunks indexed by $c$: the exponentials of a chunk are computed just before its values are aggregated, and each thread reads the bias entries of its score fragment from shared memory in the register layout of the score accumulator. Ascending chunk and key-tile traversal fixes the accumulation order in Equation~(\ref{tpfapp:eq:online}).

\begin{algorithm}[!t]
\caption{\textsc{TurboPairFormer} forward pass, thread-block view}
\label{tpfapp:alg:forward}
\small
\algrenewcommand\algorithmicindent{1.0em}
\begin{algorithmic}[1]
\Require $Q,K,V\in\mathbb R^{B\times N_o\times H\times N\times D}$, $\mathrm{bias}\in\mathbb R^{B\times H\times N\times N}$, $M$ in HBM; one query tile of $\tau_q^{\mathrm f}$ rows in slice $o$; key tiles $\mathcal K^{\mathrm f}_t$ of $\tau_k^{\mathrm f}$ keys.
\Statex \textbf{Stage the query tile once}
\State Load $Q_o$ from HBM to shared memory, then to registers.
\State Prefetch $K_o$, $V_o$, $M_o$, and the bias tile of key tile $0$ into shared-memory buffer $0$.
\State On chip, set $\mu=-\infty$, $\lambda=0$, and $\mathrm{acc}=0\in\mathbb R^{\tau_q^{\mathrm f}\times D}$ in FP32.
\Statex \textbf{Visit the key tiles}
\For{each key tile $t$, in ascending order}
  \State Wait for $K_o$ of tile $t$; on chip, compute $T=Q_oK_o^\top$ (WGMMA).
  \State Prefetch $K_o$, $V_o$, $M_o$, and the bias tile of key tile $t+1$ into the other buffer.
  \State Wait for the bias tile of tile $t$; on chip, compute $S=T/\sqrt D+\mathrm{bias}+M_o$.
  \State On chip, compute $\mu'=\max(\mu,\operatorname{rowmax}(S))$.
  \State Per row, set $\rho=0$ if $\mu=-\infty$, otherwise $\rho=\exp(\mu-\mu')$.
  \State On chip, set $\lambda\gets\rho\lambda$, $\mathrm{acc}\gets\rho\,\mathrm{acc}$, and $\mu\gets\mu'$.
  \State Wait for $V_o$ of tile $t$.
  \For{each 16-key chunk $c$, in ascending order}
    \State Set each $P_c$ entry to zero if its score is $-\infty$, otherwise to $\exp(S_c-\mu)$.
    \State On chip, compute $\mathrm{acc}\gets\mathrm{acc}+P_cV_{o,c}$ (mma.sync).
  \EndFor
  \State On chip, compute $\lambda\gets\lambda+\operatorname{rowsum}(P)$.
\EndFor
\Statex \textbf{Write the outputs}
\State Per row with $\lambda>0$, compute $O_{32}=\mathrm{acc}/\lambda$ and $\mathrm{LSE}=\mu+\log\lambda$.
\State For rows with $\lambda=0$, set $O_{32}=0$ and $\mathrm{LSE}=-\infty$.
\State On chip, round $O_{32}$ to $O_{16}$ in BF16, and round the residual $O_{32}-O_{16}$ to $E_{16}$ in BF16.
\State Write $O_{16}$, $E_{16}$, and $\mathrm{LSE}$ to HBM.
\end{algorithmic}
\end{algorithm}

\subsection{Staging and Output Residual}
\label{tpfapp:forward-lanes}
\label{tpfapp:forward-residual}

The Tensor Memory Accelerator (TMA) loads the key, value, and pair-bias tiles into two alternating shared-memory buffers, so the transfers for key tile $t+1$ overlap the computation on tile $t$; the key mask is prefetched with ordinary loads. Scores use WGMMA and value aggregation uses \texttt{mma.sync}, both with BF16 operands and FP32 accumulation, and the kernel is instantiated for $D\in\{16,32,64\}$. The epilogue rounds the FP32 output to the public BF16 output $O_{16}$ and also saves its BF16 rounding residual $E_{16}$ (Equation~(\ref{tpfapp:eq:residual-rounding})), which Section~\ref{tpfapp:residual} analyzes.

\section{Backward Identities and Ordered Accumulation}
\label{tpfapp:backward}

\subsection{Gradient Identities}
\label{tpfapp:gradients}

Suppressing batch and head indices, the attention core is
\begin{align}
S_{oqk}&=\frac{1}{\sqrt{D}}\langle Q_{oq:},K_{ok:}\rangle+\mathrm{bias}_{qk}+M_{oqk},\\
P_{oq:}&=\operatorname{softmax}(S_{oq:}),
&O_{oq:}&=\sum_k P_{oqk}V_{ok:}.
\end{align}
For a scalar loss and incoming derivative $dO$, the backward identities are, in exact arithmetic,
\begin{align}
dP_{oqk}&=\langle dO_{oq:},V_{ok:}\rangle,\\
\delta_{oq}&=\sum_k P_{oqk}dP_{oqk}
             =\langle dO_{oq:},O_{oq:}\rangle,\label{tpfapp:eq:delta}\\
dS_{oqk}&=P_{oqk}(dP_{oqk}-\delta_{oq}),\label{tpfapp:eq:ds}\\
dQ_{oq:}&=\frac{1}{\sqrt{D}}\sum_k dS_{oqk}K_{ok:},
&dK_{ok:}&=\frac{1}{\sqrt{D}}\sum_q dS_{oqk}Q_{oq:},\label{tpfapp:eq:qk}\\
dV_{ok:}&=\sum_q P_{oqk}dO_{oq:},
&d\,\mathrm{bias}_{qk}&=\sum_o dS_{oqk}.\label{tpfapp:eq:vb}
\end{align}

These identities apply to rows with at least one valid key. For fully masked rows, adopting the zero-output convention gives zero attention gradients.

\paragraph{Derivation.} Differentiating $O=PV$ gives $dP=dO\,V^\top$ and $dV=P^\top dO$. The softmax Jacobian in each row is $\operatorname{diag}(P)-PP^\top$, yielding Equation~(\ref{tpfapp:eq:ds}). Differentiating the bilinear score gives Equation~(\ref{tpfapp:eq:qk}); differentiating the broadcast bias gives the sum over $o$ in Equation~(\ref{tpfapp:eq:vb}). Finally, exchanging the finite sums over $k$ and feature index $d$ gives
\[
\sum_k P_{oqk}\sum_d dO_{oqd}V_{okd}
=\sum_d dO_{oqd}\sum_k P_{oqk}V_{okd}
=\langle dO_{oq:},O_{oq:}\rangle.
\]
This last identity makes the row correction available before a key-tile traversal when the forward output is retained.

The three reduction axes determine the backward ownership problem: $dQ$ sums over keys, $dK,dV$ sum over queries, and $d\,\mathrm{bias}$ sums over outer indices. The next construction shares the recomputed intermediates while assigning each partial a unique writer.

\subsection{Key-Tile Ownership and Gradient Partitioning}

Partition keys into disjoint tiles $\mathcal K_t$ and outer indices into disjoint outer groups $\mathcal L_\ell$. A thread block owns one key tile, one head, and one outer group, and then traverses query tiles. Separate $dK,dV$ accumulators retain the contribution for each owned outer index. The remaining reductions are partitioned into the exact partials
\begin{align}
dQ_{\mathrm{part},oq:}^{(t)}&=\sum_{k\in\mathcal K_t}dS_{oqk}K_{ok:},\\
d\,\mathrm{bias}_{\mathrm{part},qk}^{(\ell)}&=\sum_{o\in\mathcal L_\ell}dS_{oqk}.
\end{align}

\begin{tpfproposition}[Partitioned backward]
\label{tpfapp:prop:partition}
If every logical $(o,q,k)$ contribution is computed exactly once, then
\begin{equation}
dQ_{oq:}=\frac{1}{\sqrt{D}}\sum_t dQ_{\mathrm{part},oq:}^{(t)},\qquad
d\,\mathrm{bias}_{qk}=\sum_\ell d\,\mathrm{bias}_{\mathrm{part},qk}^{(\ell)},
\label{tpfapp:eq:partials}
\end{equation}
and the thread blocks that own the key tiles obtain $dK,dV$ by completing their query traversals. The computation can reuse one recomputation of each probability tile for all four gradients without retaining the complete score gradient.
\end{tpfproposition}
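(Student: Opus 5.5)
The argument works in exact arithmetic and rests on the fact that the key tiles $\mathcal K_t$ partition the key axis and the outer groups $\mathcal L_\ell$ partition the outer axis. First, I would substitute the definition of $dQ^{(t)}_{\mathrm{part},oq:}$ into the right-hand side of Equation~(\ref{tpfapp:eq:partials}). Because the tiles are disjoint and cover all keys, the double sum $\sum_t\sum_{k\in\mathcal K_t}$ regroups into $\sum_k$. Equation~(\ref{tpfapp:eq:qk}) then gives $dQ_{oq:}$ after the factor $1/\sqrt D$. The bias identity follows the same way: $\sum_\ell\sum_{o\in\mathcal L_\ell}dS_{oqk}=\sum_o dS_{oqk}$, which is $d\,\mathrm{bias}_{qk}$ by Equation~(\ref{tpfapp:eq:vb}). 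The hypothesis that each logical $(o,q,k)$ contribution is computed exactly once is exactly what makes this regrouping valid. It means each term appears once in exactly one partial, with no omission and no duplication. Invalid outer positions and fully masked rows contribute zero by the stated conventions, so padding the last tile or group changes nothing.

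Second, for $dK$ and $dV$, I would fix a key $k\in\mathcal K_t$ and an outer index $o\in\mathcal L_\ell$. Their sums $\sum_q dS_{oqk}Q_{oq:}$ and $\sum_q P_{oqk}dO_{oq:}$ run only over queries. The block owning $(t,\ell)$ visits every query tile, and the query tiles partition the query axis. So after the traversal its per-slice accumulators hold the full sums, and the final $1/\sqrt D$ scaling of $dK$ completes Equation~(\ref{tpfapp:eq:qk}). No other block touches these entries, since the pairs $(t,\ell)$ partition the $(o,k)$ index set.

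Third, for the reuse claim, I would show that for a fixed query tile and slice, every needed quantity is available once $P$ and $dS$ for that tile are formed. The recomputation $P_{oqk}=\exp(S_{oqk}-\mathrm{LSE}_{oq})$ needs only the tile's scores and the saved row statistic. The correction $\delta_{oq}=\langle dO_{oq:},O_{oq:}\rangle$ is available before the traversal by the identity derived in the Derivation paragraph, so $dS$ for the tile needs no other key tiles. The tile's $P$ then feeds $dV$, and its $dS$ feeds the $dK$ accumulator, the $dQ^{(t)}$ partial, and the group's bias partial. Each tile is therefore recomputed once, and after all four uses it can be discarded. Storage is thus bounded by per-tile buffers plus the partials, and the complete $dS$ is never retained.

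The main obstacle is making ``computed exactly once'' rigorous across the two-level ownership. I would show that the map from $(o,q,k)$ to (block $(t,\ell)$, query tile, slice) is a bijection onto the loop iterations of Algorithm~\ref{alg:tpf_backward}. This shows each contribution lands in a unique partial with a unique writer. Everything else is finite-sum regrouping.
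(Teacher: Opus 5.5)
Your proposal is correct and follows essentially the same route as the paper's proof. Both substitute the disjoint key-tile and outer-group partitions into the finite sums so that every term lands in exactly one partial, and both note that the key-tile owner visits every query tile. Both also observe that once $\delta$ is available (via $\delta_{oq}=\langle dO_{oq:},O_{oq:}\rangle$), each recomputed $P$ tile supplies $dV$ and its $dS$ tile supplies $dQ$, $dK$, and $d\,\mathrm{bias}$. Your bijection onto the loop iterations goes beyond what the statement asks, since single computation of each contribution is a hypothesis of the proposition rather than something it proves.
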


\begin{proof}
Substitute the disjoint key and outer partitions into the finite sums in Equations~(\ref{tpfapp:eq:qk})--(\ref{tpfapp:eq:vb}). Every term belongs to exactly one partial. Once $\delta$ is available, a recomputed tile $P$ supplies $dV$; its associated tile $dS$ supplies $dQ$, $dK$, and $d\,\mathrm{bias}$. A thread block that owns a key tile visits all query tiles required by its $dK,dV$ entries. The remaining sums are exactly Equation~(\ref{tpfapp:eq:partials}).
\end{proof}

Each thread block writes its partials at unique addresses, and separate reduction kernels add them in a prescribed order, so one probability recomputation per tile suffices without floating-point atomics. The partials are stored in BF16 and reduced in FP32. Numerical accuracy is evaluated against a reference, whereas repeatability follows from the fixed operation order below.

\subsection{Backward Kernel Implementation}
\label{tpfapp:algorithms}

Algorithm~\ref{tpfapp:alg:leaf} expands the main kernel of Algorithm~\ref{alg:tpf_backward} into thread-block steps, and Figure~\ref{tpfapp:fig:backward-dataflow} shows where each tensor resides. A thread block of two warpgroups keeps the key-tile operands in shared memory, double-buffers the query-tile operands, and forms $S$, $P$, $dP$, and $dS$ once per tile for all four gradients. Each warpgroup adds the score gradients of its two slices into an FP32 bias subtotal, and the two subtotals form the outer group's partial. Probabilities are zero in fully masked rows and at invalid outer indices, so they add nothing. The row correction $\widehat\delta$ comes from a preprocessing kernel, and enabling residual compensation changes only this scalar. The reduction kernels give each output element one owner and add its partials in ascending key-tile order for $dQ$ and in ascending outer-group order for $d\,\mathrm{bias}$ (Equations~(\ref{tpfapp:eq:partials}) and~(\ref{tpfapp:eq:ordered-fold})).

\begin{figure}[!t]
\centering
\includegraphics[width=\linewidth]{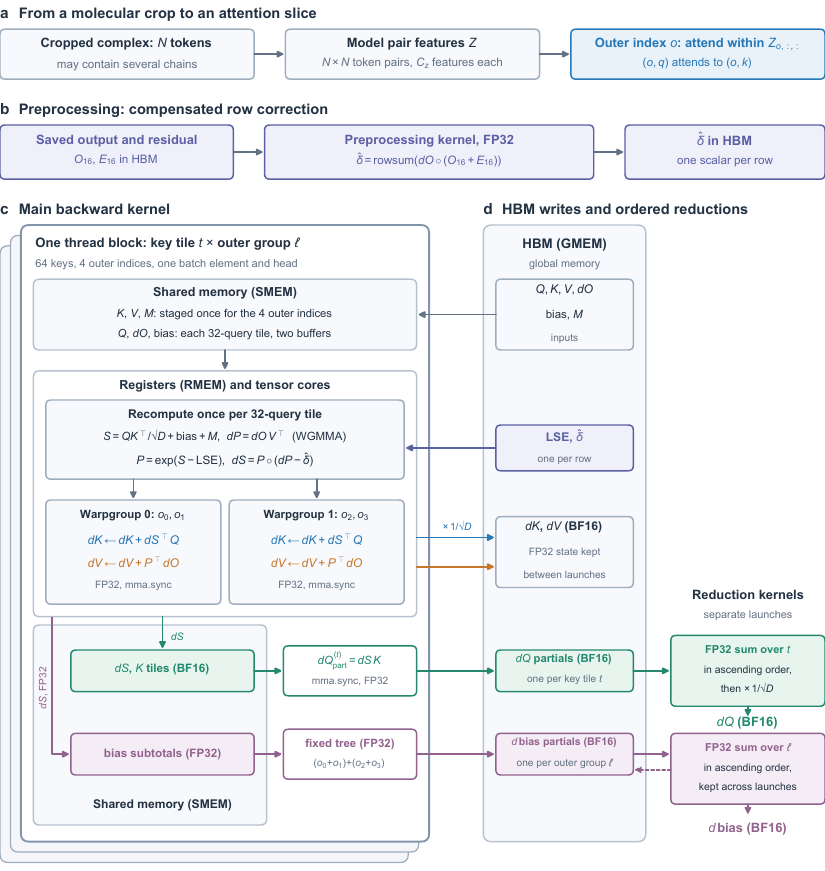}
\caption{\textbf{Backward data movement.} (a) Fixing the outer index $o$ selects the slice $Z_{o,:,:}$, in which the query pair $(o,q)$ attends to the pairs $(o,k)$. (b) A preprocessing kernel computes the compensated row correction $\hat\delta$ in FP32 and writes it to HBM. (c) Each thread block of the main backward kernel owns one key tile and one outer group, and further thread blocks run concurrently on other SMs. It stages $K$, $V$, and $M$ once and double-buffers $Q$, $dO$, and the bias tile in shared memory (SMEM), recomputes $S$, $P$, $dP$, and $dS$ once per query tile, and accumulates $dK$ and $dV$ in FP32 registers (RMEM). The score gradient passes through SMEM to form one query-gradient partial per key tile and, through a fixed FP32 tree, one pair-bias partial per outer group. (d) The thread block writes $dK$, $dV$, and its partials to HBM; separate reduction kernels add the partials in ascending index order in FP32, and the pair-bias partial buffers are reused once consumed (dashed arrow).}
\label{tpfapp:fig:backward-dataflow}
\end{figure}

\begin{algorithm}[!t]
\caption{\textsc{TurboPairFormer} main backward kernel, thread-block view}
\label{tpfapp:alg:leaf}
\small
\algrenewcommand\algorithmicindent{1.0em}
\begin{algorithmic}[1]
\Require $Q,K,V,dO,\mathrm{bias},M,\mathrm{LSE},\widehat\delta$ in HBM; key tile $t$; outer group $\ell$ with outer indices $o_0,\dots,o_3$; the query rows of this launch.
\Statex \textbf{Stage the key tile once}
\State Load $K_o,V_o,M_o$ of key tile $t$ for $o\in\mathcal L_\ell$ from HBM to shared memory.
\State On chip, set $dK_o=dV_o=0$ in FP32, or reload the FP32 values saved by an earlier launch.
\State Prefetch $Q_o,dO_o$ and the bias tile of the first query tile into shared-memory buffer $0$.
\Statex \textbf{Visit the query tiles}
\For{each query tile, in ascending order}
  \State Wait for this tile's buffer; prefetch the next query tile into the other buffer.
  \For{each warpgroup in parallel, and each of its two outer indices $o$ in order}
    \State On chip, compute $S=Q_oK_o^\top/\sqrt D+\mathrm{bias}+M_o$ and $dP=dO_oV_o^\top$ (WGMMA).
    \State On chip, compute $P=\exp(S-\mathrm{LSE})$ and $dS_o=P\circ(dP-\widehat\delta)$.
    \State On chip, compute $dV_o\gets dV_o+P^\top dO_o$ and $dK_o\gets dK_o+dS_o^\top Q_o$ (mma.sync).
    \State Add $dS_o$ to the FP32 bias subtotal of this warpgroup; copy $dS_o$ to shared memory in BF16.
  \EndFor
  \State Synchronize the thread block.
  \State Write $d\,\mathrm{bias}_{\mathrm{part}}^{(\ell)}=(dS_{o_0}+dS_{o_1})+(dS_{o_2}+dS_{o_3})$, the sum of the two subtotals, to HBM in BF16.
  \State On chip, compute $dQ_{\mathrm{part},o}^{(t)}=dS_oK_o$ for each $o$ (mma.sync); write them to HBM in BF16.
\EndFor
\Statex \textbf{Store the key-tile gradients}
\If{this launch covers the last query rows}
  \State Write $dK_o/\sqrt D$ and $dV_o$ to HBM in BF16.
\Else
  \State Write $dK_o$ and $dV_o$ to HBM in FP32, to be reloaded by the next launch.
\EndIf
\end{algorithmic}
\end{algorithm}

\subsection{Workspace Schedule Invariance}
\label{tpfapp:invariance}

To bound the resident workspace, the planner splits the backward pass into launches over consecutive chunks of $B_c$ examples, $Q_s$ query rows, and $R$ outer groups; each query row still attends to all valid keys. These units describe GPU work and memory residency, not protein chains or additional crops.

Let $\widehat{d\,\mathrm{bias}}_{\mathrm{part}}^{(\ell)}$ denote the stored BF16 bias partial for one output element of outer group $\ell$. At a common query and key coordinate, its local reduction is
\begin{equation}
\widehat{d\,\mathrm{bias}}_{\mathrm{part}}^{(\ell)}=\operatorname{RN}_{16}\!\left(
 \operatorname{RN}_{32}\!\left[
  \operatorname{RN}_{32}(dS_{o_0}+dS_{o_1})
  +\operatorname{RN}_{32}(dS_{o_2}+dS_{o_3})\right]\right).
\label{tpfapp:eq:leaf-rounding}
\end{equation}
The two inner additions are the warpgroup subtotals; the remaining FP32 addition combines them before BF16 storage. Invalid outer positions contribute zero. Here $\operatorname{RN}_{16}$ and $\operatorname{RN}_{32}$ denote round-to-nearest conversion to BF16 and FP32. Fix the input bits, binary, device, arithmetic tiles, and instruction semantics, including this grouping, precision, and parenthesization. Exact BF16-to-FP32 conversion gives the ordered sum
\begin{align}
d\,\mathrm{bias}_{\mathrm{acc}}^{(0)}&=\operatorname{FP32}(\widehat{d\,\mathrm{bias}}_{\mathrm{part}}^{(0)}),\\
d\,\mathrm{bias}_{\mathrm{acc}}^{(\ell)}&=\operatorname{RN}_{32}
 \bigl(d\,\mathrm{bias}_{\mathrm{acc}}^{(\ell-1)}+\operatorname{FP32}(\widehat{d\,\mathrm{bias}}_{\mathrm{part}}^{(\ell)})\bigr),
 \quad 1\leq\ell<L.\label{tpfapp:eq:ordered-fold}
\end{align}
The final cast occurs after the last outer group. Across launches, the reduction kernel retains the current FP32 state and resumes at the next outer group. Query partials follow the fixed key-tile order.

\begin{tpfproposition}[Workspace schedule invariance]
\label{tpfapp:prop:invariance}
Suppose a physical schedule changes only residency and execution timing, and satisfies all of the following conditions:
\begin{enumerate}
\item Every logical partial is produced once by the same deterministic floating-point operations and written by one thread block to its own workspace location.
\item Each reduction kernel consumes the same ordered partials. Across a launch boundary it stores and reloads the current FP32 running state without conversion or reassociation.
\item If query traversal spans several launches, $dK,dV$ retain their FP32 accumulator states and continue the same ordered query-tile operations.
\item Stream ordering between the main kernel and the reduction kernels prevents stale reads and buffer reuse before consumption, and the final output casts remain unchanged.
\end{enumerate}
Then all four output gradients are bitwise invariant across such schedules.
\end{tpfproposition}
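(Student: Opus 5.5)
The plan is to model each output gradient element as the value of a fixed, finite sequence of floating-point operations on fixed input bits. We then show that a schedule satisfying the four conditions leaves that sequence unchanged: the same operands, the same operations, the same order and the same rounding points. Under fixed binary, device and instruction semantics, each IEEE operation (BF16/FP32 conversion, FP32 add and multiply, and the fixed \texttt{mma.sync}/WGMMA instruction sequences) is a deterministic function of its operand bits. Equal operation sequences on equal inputs therefore produce equal output bits. The argument is an induction along this dataflow graph, handled separately for the three kinds of outputs.

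\textbf{Step 1 (partials).} Condition~1 says every logical partial, namely $\widehat{d\,\mathrm{bias}}_{\mathrm{part}}^{(\ell)}$ as in Equation~(\ref{tpfapp:eq:leaf-rounding}) and the BF16 $dQ_{\mathrm{part}}^{(t)}$, is produced once by the same deterministic operations. Its inputs are $Q,K,V,dO,\mathrm{bias},M,\mathrm{LSE},\widehat\delta$; these are either inputs or outputs of the forward and preprocessing kernels, and the schedule does not affect them. Hence each stored partial has identical bits in every schedule. Unique writers rule out races on the stored value. Condition~4 (stream ordering) ensures that every reader observes exactly this value, and not a stale or overwritten buffer, even when bias buffers are reused.

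\textbf{Step 2 (reductions).} For $d\,\mathrm{bias}$, induct on $\ell$ in Equation~(\ref{tpfapp:eq:ordered-fold}). The base case $\mathrm{acc}^{(0)}$ is an exact conversion of identical bits. For the inductive step, if $\mathrm{acc}^{(\ell-1)}$ agrees across schedules, then $\mathrm{acc}^{(\ell)}$ is the same $\operatorname{RN}_{32}$ addition applied to identical operands. A launch boundary between $\ell-1$ and $\ell$ only stores and reloads the FP32 state without conversion (condition~2). Store and reload form the identity on bits, so the recurrence is unchanged. The same induction over ascending key tiles gives the FP32 $dQ$ accumulator; the final $1/\sqrt D$ scaling and BF16 casts are the same operations (condition~4). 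For $dK,dV$, induct over query tiles in ascending order. Each update is the same fixed-instruction FP32 accumulation with identical inputs from Step~1, and a launch boundary inserts an FP32 store and reload that is the identity (condition~3). The final scaling and cast are unchanged.

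\textbf{Main obstacle.} The delicate point is showing that ``changes only residency and timing'' really leaves the operation graph untouched. In particular, chunking by $B_c$, $Q_s$ and $R$ must not alter tile boundaries, the four-slice addition tree, or the order in which partials enter the fold. I would handle this first by arguing that the chunk boundaries are aligned to whole query tiles and whole outer groups; this follows because the arithmetic tiles are fixed by hypothesis. With that alignment, every logical partial and every accumulator step lies entirely within one launch. The remaining concern is hidden conversions at boundaries, such as FP32 state being cast to BF16. Conditions~2 and~3 exclude these explicitly, and Algorithm~\ref{tpfapp:alg:leaf} stores FP32 intermediates exactly in that case. Combining Steps~1 and~2 gives bitwise invariance of all four gradients.
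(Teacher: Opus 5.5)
Your proposal is correct and follows essentially the paper's own argument. Both proofs induct over the fixed dataflow DAG: identical operand bits and identical operations give identical results, unique writers and stream ordering preserve the operands, and launch boundaries add only bit-preserving FP32 stores and reloads. Your separate treatment of the partials, the ordered folds, and the $dK,dV$ recurrences spells out what the paper compresses into two remarks: query chunks partition the $dQ$ and $d\,\mathrm{bias}$ output coordinates, and batch chunks act on independent coordinates.
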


\begin{proof}
Each schedule evaluates the same directed acyclic graph of deterministic floating-point operations. Induct over its dependencies: input nodes have the same bits; a node with identical operand bits and the same operation has identical result bits. Unique writers and synchronization preserve those operands. In particular, dividing Equation~(\ref{tpfapp:eq:ordered-fold}) into launches only inserts bit-preserving FP32 stores and loads. Query chunks partition $dQ$ and $d\,\mathrm{bias}$ output coordinates, while their contributions to the same $dK,dV$ entries continue the original FP32 recurrences. Batch chunks act on independent batch coordinates. Thus every output node has the same bits.
\end{proof}

The distinction between rescheduling and regrouping is essential. Under FP32 round-to-nearest with ties to even,
\begin{align*}
\operatorname{RN}_{32}\bigl(\operatorname{RN}_{32}(2^{24}+1)+1\bigr)
 &=2^{24},\\
\operatorname{RN}_{32}\bigl(2^{24}+\operatorname{RN}_{32}(1+1)\bigr)
 &=2^{24}+2.
\end{align*}
A launch boundary therefore preserves the current FP32 state instead of introducing a new partial sum: regrouping changes the arithmetic even when both schedules are deterministic.

\subsection{Output Residual Compensation}
\label{tpfapp:residual}

The row correction enters every score gradient in a query row. Retaining the forward output's rounding residual reduces the output-reconstruction error bound and can improve this shared quantity before probability recomputation. The forward epilogue computes
\begin{equation}
O_{16}=\operatorname{RN}_{16}(O_{32}),\qquad
E_{16}=\operatorname{RN}_{16}\!\left(\operatorname{RN}_{32}(O_{32}-O_{16})\right).
\label{tpfapp:eq:residual-rounding}
\end{equation}
The subtraction widens $O_{16}$ to FP32. Once $dO$ is available, a separate row reduction computes
\begin{equation}
\widehat\delta_{oq}
=\sum_d^{\mathrm{FP32}}dO_{oqd}\,
\operatorname{RN}_{32}\!\left(O_{16,oqd}+E_{16,oqd}\right).
\label{tpfapp:eq:delta-evaluation}
\end{equation}
All BF16 operands are widened to FP32; both the reconstruction and the dot product execute in FP32. This scalar is available before the key-tile traversal and requires no probability recomputation. The following proposition isolates its effect on the four gradients.

\begin{tpfproposition}[Propagation of the row correction]
\label{tpfapp:prop:correction}
For rows with at least one valid key, hold $P,dP,Q,K,V,dO$ fixed in exact arithmetic and replace $\delta$ by $\widehat\delta=\delta+\eta$. Then
\begin{align}
\Delta dS_{oqk}&=-P_{oqk}\eta_{oq},
&\sum_k\Delta dS_{oqk}&=-\eta_{oq},\\
\Delta dQ_{oq:}&=-\frac{1}{\sqrt{D}}\eta_{oq}\sum_kP_{oqk}K_{ok:},\\
\Delta dK_{ok:}&=-\frac{1}{\sqrt{D}}\sum_qP_{oqk}\eta_{oq}Q_{oq:},\\
\Delta d\,\mathrm{bias}_{qk}&=-\sum_oP_{oqk}\eta_{oq},
&\Delta dV&=0.
\label{tpfapp:eq:row-error}
\end{align}
\end{tpfproposition}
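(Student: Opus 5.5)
The plan is to exploit linearity: with $P$, $dP$, $Q$, $K$, $V$, $dO$ held fixed, every gradient in Equations~(\ref{tpfapp:eq:ds})--(\ref{tpfapp:eq:vb}) is an affine function of the row correction. The change $\Delta dX$ caused by replacing $\delta$ with $\widehat\delta=\delta+\eta$ is therefore obtained by substituting $\eta$ into the linear part and dropping every term that does not involve $\delta$. I will work in exact arithmetic throughout, for a row with at least one valid key, so that $P_{oq:}$ is a genuine probability vector.

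First, from Equation~(\ref{tpfapp:eq:ds}),
\[
\widehat{dS}_{oqk}-dS_{oqk}=P_{oqk}\bigl(dP_{oqk}-\delta_{oq}-\eta_{oq}\bigr)-P_{oqk}\bigl(dP_{oqk}-\delta_{oq}\bigr)=-P_{oqk}\eta_{oq}.
\]
Summing over $k$ and using $\sum_kP_{oqk}=1$, which holds because the row has a valid key and the softmax normalizes, gives $\sum_k\Delta dS_{oqk}=-\eta_{oq}$. Masked keys carry $P_{oqk}=0$, so they contribute nothing to this sum.

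Second, I substitute $\Delta dS$ into the linear maps of Equations~(\ref{tpfapp:eq:qk}) and~(\ref{tpfapp:eq:vb}):
\[
\Delta dQ_{oq:}=\tfrac{1}{\sqrt D}\sum_k\Delta dS_{oqk}K_{ok:},\qquad
\Delta dK_{ok:}=\tfrac{1}{\sqrt D}\sum_q\Delta dS_{oqk}Q_{oq:},\qquad
\Delta d\,\mathrm{bias}_{qk}=\sum_o\Delta dS_{oqk}.
\]
In the $dQ$ expression, $\eta_{oq}$ does not depend on $k$, so it factors out of the sum over keys. In the $dK$ and $d\,\mathrm{bias}$ expressions, $\eta$ varies with $q$ and $o$ respectively, so it stays inside the sum. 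Finally, $dV=P^\top dO$ contains neither $\delta$ nor $dS$, which gives $\Delta dV=0$.

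I do not expect a genuine obstacle, since the argument is bookkeeping. The one point needing care is scope. The statement says rows have at least one valid key, but the $dK$ and $d\,\mathrm{bias}$ sums run over $q$ and $o$ and can include fully masked rows or invalid outer positions. Those rows have $P=0$, so they contribute zero under the zero-gradient convention adopted after Equation~(\ref{tpfapp:eq:vb}), and the formulas hold with the sums taken over all indices.
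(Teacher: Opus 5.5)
Your proposal is correct and follows the paper's proof: subtract the two instances of Equation~(\ref{tpfapp:eq:ds}) to get $\Delta dS_{oqk}=-P_{oqk}\eta_{oq}$, substitute this into the linear maps of Equations~(\ref{tpfapp:eq:qk})--(\ref{tpfapp:eq:vb}), and use $\sum_kP_{oqk}=1$ for the row-sum identity. Your remark that fully masked rows and invalid outer positions contribute nothing because $P=0$ there is consistent with the paper's zero-output convention.
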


\begin{proof}
Subtract the two instances of Equation~(\ref{tpfapp:eq:ds}) and substitute the difference into Equations~(\ref{tpfapp:eq:qk})--(\ref{tpfapp:eq:vb}). The row-sum identity follows from $\sum_kP_{oqk}=1$.
\end{proof}

The same $\eta_{oq}$ acts on every key in the row, with $\|\Delta dS_{oq:}\|_1=|\eta_{oq}|$. Before weighting by $P$, the broadcast difference $-\eta_o\mathbf 1^\top$ has rank at most one; the resulting score-gradient difference is $-\operatorname{diag}(\eta_o)P_o$. Its propagation into $dQ,dK$, and $d\,\mathrm{bias}$ depends on the subsequent contractions and possible cancellation. This identifies the shared precision channel that the residual addresses.

To isolate output conversion, let $x$ be one entry of the normalized FP32 forward result $O_{32}$, and define
\[
y=\operatorname{RN}_{16}(x),\quad e=x-y,\quad
\widehat e=\operatorname{RN}_{16}(e),\quad \widetilde x=y+\widehat e.
\]
The corresponding elementwise output reconstruction is $\widetilde O_{32}=O_{16}+E_{16}$. The sums defining $\widetilde x$ and $\widetilde O_{32}$ are interpreted over the reals for the following bound. Assume round-to-nearest, no overflow or relevant underflow, and exact evaluation of the residual subtraction. With BF16 unit roundoff $u_{16}=2^{-8}$,
\begin{equation}
|\widetilde x-x|=|\widehat e-e|
\leq u_{16}|e|\leq u_{16}^2|x|,
\qquad |y-x|\leq u_{16}|x|.
\label{tpfapp:eq:residual-bound}
\end{equation}
Rounded subtraction contributes its own term, and explicit FP32 reconstruction contributes at most $u_{32}|\widetilde x|$, where $u_{32}=2^{-24}$, under the same normal-range assumptions. For an exactly evaluated row dot product, Equation~(\ref{tpfapp:eq:residual-bound}) gives
\[
\bigl|\langle dO,\widetilde O_{32}-O_{32}\rangle\bigr|
\leq u_{16}^2\sum_d|dO_d(O_{32})_d|.
\]
To relate this component to the computed row correction, let $O^*=PV$ be the exact mathematical output and decompose
\begin{align}
\widehat\delta-\langle dO,O^*\rangle
={}&\underbrace{\widehat\delta-\langle dO,\widetilde O_{32}\rangle}
                 _{\text{evaluation rounding}}\\
 &+\underbrace{\langle dO,\widetilde O_{32}-O_{32}\rangle}
                 _{\text{output conversion}}\nonumber\\
 &+\underbrace{\langle dO,O_{32}-O^*\rangle}
                 _{\text{forward arithmetic}}.\nonumber
\end{align}
Residual retention reduces the bound on the output-conversion term from first to second order in BF16 unit roundoff under the stated assumptions. This does not guarantee a lower error in every row correction: reconstruction errors can cancel in the dot product, and the other two terms remain. The design preserves the public BF16 output and supplies the correction without another probability recomputation, at the cost of one saved BF16 tensor.

\section{Workspace Complexity and Training Memory}
\label{tpfapp:memory}

TurboPairFormer stores BF16 gradient partials to reuse each recomputed tile and to fix the accumulation order. The query partials form one buffer of logical shape $[J_K,BN_o,H,N,D]$; with two bytes per element and $N_o=N$,
\begin{equation}
W_Q=2J_KBN_oHND=2BHDN^2\left\lceil\frac{N}{\tau_k}\right\rceil
=\Theta\!\left(\frac{BHDN^3}{\tau_k}\right)
\label{tpfapp:eq:wq}
\end{equation}
for a fixed tile width. Storing the bias partials of all outer groups would take $W_{\mathrm{bias,all}}=2BLHN^2$ bytes; the planner instead keeps $B_c$ examples, $R$ outer groups, and $Q_s$ query rows resident,
\begin{equation}
W_{\mathrm{bias,res}}=2B_cRHQ_sN,
\label{tpfapp:eq:wb}
\end{equation}
and reuses these buffers in successive launches. The saved output residual occupies $W_E=2BN_oHND$ bytes per attention call. A complete FP32 score gradient would occupy $W_{dS}=4BHN^3$ bytes, so when $\tau_k$ divides $N$,
\begin{equation}
\frac{W_Q}{W_{dS}}=\frac{D}{2\tau_k},
\label{tpfapp:eq:cubic}
\end{equation}
which is one quarter at $D=32$ and $\tau_k=64$. Table~\ref{tpfapp:tab:memory} lists these sizes. At crop 768, the selected plan uses $Q_s=768$ and $R=56$, keeping 252\,MiB of bias partials resident beside 1728\,MiB of query partials and a 144\,MiB residual per call.

\begin{table}[!htbp]
\centering
\caption{Analytical allocation sizes in MiB for $B=1$, $N_o=N$, $H=4$, $D=32$, $\tau_k=64$, $G=4$, and BF16 partials. These are tensor sizes, not measured training peaks; crop 1024 is extrapolated.}
\label{tpfapp:tab:memory}
\small
\setlength{\tabcolsep}{5pt}
\begin{tabular}{@{}rrrrrr@{}}
\toprule
$N$ & $W_Q$ & $W_{\mathrm{bias,all}}$ & $W_{\mathrm{bias,res}}$ & $W_E$ & $W_{dS}$ \\
\midrule
384 & 216 & 108 & 108 & 36 & 864 \\
640 & 1000 & 500 & 500 & 100 & 4000 \\
768 & 1728 & 864 & 252 & 144 & 6912 \\
1024 & 4096 & 2048 & 256 & 256 & 16384 \\
\bottomrule
\end{tabular}
\end{table}

Whether this workspace raises the training peak depends on its lifetime. Let $A(t)$ be the bytes of all other live tensors and $W(t)$ the temporary workspace at execution point $t$, and let $P_0=\max_tA(t)$ and $P_1=\max_t\{A(t)+W(t)\}$. Then
\begin{equation}
P_1=P_0
\quad\Longleftrightarrow\quad
W(t)\leq P_0-A(t)\ \text{for every }t,
\label{tpfapp:eq:headroom}
\end{equation}
because the right-hand condition states $A(t)+W(t)\leq P_0$, while $W(t)\geq0$ gives $P_1\geq P_0$. Transient workspace can therefore fit below a peak set elsewhere in the step by saved activations, gradients, or optimizer states. At crop 768, sweeping the bias-workspace target from 64 to 512\,MiB left the measured peak memory unchanged. The sweep holds the query-partial buffer fixed, so it shows only that bias scheduling does not raise the peak. The main limitation is this buffer, which grows cubically with crop size and linearly with local batch size; reducing it while keeping the fixed accumulation order is future work.

\section{Comparison with Existing Kernels}
\label{tpfapp:comparison}

Triangular-attention backward kernels differ in how they share recomputed probabilities and combine contributions across thread blocks. Table~\ref{tpfapp:tab:comparison} summarizes the inspected implementations.\footnote{Inspected sources: \href{https://github.com/latkins/trifast/blob/2d5251135c774d0585e9bc042d7209c6b7c470cf/src/trifast/triton.py\#L143-L501}{TriFast}, \href{https://github.com/bytedance/Protenix/blob/4c355be4553512f72453ecbfb65e69f4c35d1413/protenix/model/tri_attention/backward.py\#L519-L609}{Protenix}, \href{https://github.com/Supercomputing-System-AI-Lab/MegaFold/blob/845905e7c707cf4004404354156df97f2efc4fde/megafold/model/FusedEvoAttention/evoattention.py\#L352-L516}{MegaFold}, and the \href{https://github.com/aqlaboratory/openfold-3/blob/57059a86cdf87379914817bce8e3e84207995ae4/openfold3/core/kernels/triton/evoformer.py\#L753-L774}{newer} and \href{https://github.com/aqlaboratory/openfold-3/blob/4e2377bf73f664e05dbb68cf62e4b8774f0d899e/openfold3/core/kernels/triton/evoformer.py\#L597-L603}{earlier} OpenFold3 kernels.} Among them, only TurboPairFormer combines one recomputation, no dense FP32 score gradient, no floating-point atomics, and a fixed accumulation order.

\begin{table}[!htbp]
\centering
\caption{Structure of the inspected backward kernels. The three ``No'' columns indicate the absence of dense FP32 score-gradient storage or floating-point atomic accumulation. ``Fixed'' denotes an accumulation order independent of block retirement under fixed execution settings. Partial storage in TurboPairFormer remains cubic in $N$.}
\label{tpfapp:tab:comparison}
\small
\setlength{\tabcolsep}{3pt}
\begin{tabular}{@{}lcccccc@{}}
\toprule
Backend & Auto- & $P$ & No dense & No $dQ$ & No bias & Fixed \\
 & tune & recomputations & FP32 $dS$ & atomics & atomics & order \\
\midrule
TurboPairFormer & No & 1 & Yes & Yes & Yes & Yes \\
TriFast & Yes & 3 & Yes & Yes & Yes & Yes \\
Protenix & Yes & 3 & Yes & Yes & Yes & Yes \\
MegaFold & No & 2 & Yes & Yes & No & No \\
OpenFold3 Triton (newer) & No & 2 & Yes & No & Yes & No \\
OpenFold3 Triton (earlier) & No & 2 & Yes & Yes & No & No \\
ScaleFold & No & 1 & No & Yes & Yes & Yes \\
DeepSpeed-Evo & No & 1 & Yes & Yes & No & No \\
\bottomrule
\end{tabular}
\end{table}

\paragraph{Separate recomputation passes.} TriFast and Protenix compute $dQ$, $(dK,dV)$, and the pair-bias gradient in three kernels, each recomputing the probabilities from the saved softmax statistics. Every gradient element has one writer and no atomics are used, at the cost of three recomputations.

\paragraph{Dense score-gradient storage.} The ScaleFold path in Apex OpenFold Triton recomputes the probabilities once, writes the complete FP32 score gradient, and reduces the pair-bias gradient from it in a subsequent kernel. This avoids both repeated recomputation and atomics, but stores $W_{dS}=4BHN^3$ bytes (Appendix~\ref{tpfapp:memory}).

\paragraph{Atomic accumulation.} MegaFold recomputes twice, in a $(dK,dV)$ kernel and a fused $(dQ,d\,\mathrm{bias})$ kernel, and accumulates the bias gradient across outer positions with an atomic addition, so only that gradient can vary across replays. The newer OpenFold3 Triton source also recomputes twice but accumulates $dQ$ with atomics; the earlier source, used for the synthetic accuracy and replay results, accumulates the shared bias gradient with FP32 atomics. DeepSpeed-Evo, the DS4Sci Evoformer attention in DeepSpeed,\footnote{Official DS4Sci documentation: \href{https://www.deepspeed.ai/tutorials/ds4sci_evoformerattention/}{\texttt{deepspeed.ai/tutorials/ds4sci\_evoformerattention}}.} recomputes once in a fused backward kernel in which a block owns one slice and head and accumulates $dQ$ without atomics, but it combines the bias gradient across slices with atomic additions whose order depends on block execution.

\paragraph{Ordered partials.} Like DeepSpeed-Evo and ScaleFold, TurboPairFormer recomputes each probability tile once for the four gradients, but it differs from both in three ways. It assigns key tiles rather than whole slices to thread blocks, which exposes more parallelism at the cost of a query-partial reduction. It sums the shared pair-bias gradient through a fixed tree over four slices and ordered BF16 partials with FP32 running sums (Equations~(\ref{tpfapp:eq:partials}) and~(\ref{tpfapp:eq:ordered-fold})), instead of global atomics or a dense FP32 score gradient. It also forms the softmax correction from the BF16 output plus a saved residual, which neither of the two retains (Section~\ref{tpfapp:residual}). Its query-partial buffer is $D/(2\tau_k)$ of a dense FP32 score gradient, one quarter at the reported configuration (Equation~(\ref{tpfapp:eq:cubic})), but it remains cubic in $N$.

\section{Additional End-to-End Results}
\label{tpfapp:measurement}
\label{sec:tpf-supp-results}

This appendix reports a supplemental eight-H200 training comparison, training-loss behavior, additional accuracy and replay results, and fine-tuning validation details. The synthetic tests use absolute RMSE, whereas the captured-call studies use relative $L_2$ error.

\subsection{Training Step Latency on Eight H200 GPUs}
\label{tpfapp:measurement-results}
\label{tpfapp:h200-training}
\label{sec:tpf-supp-performance}

Table~\ref{tpfapp:tab:step-times} reports the supplemental comparison on eight H200 GPUs; each value is the mean GPU computation time over steps 21--400 of a 400-step run.

\begin{table}[!htbp]
\centering
\caption{Mean seconds per optimizer step on eight NVIDIA H200 GPUs, excluding 20 warmup steps from each 400-step run. Protenix, TriFast, MegaFold, and ScaleFold use cuEquivariance triangle multiplication. Bold denotes the lowest mean in each column.}
\label{tpfapp:tab:step-times}
\small
\begin{tabular}{@{}lrrr@{}}
\toprule
Backend & Crop 384 & Crop 640 & Crop 768 \\
\midrule
TurboPairFormer & \textbf{7.01} & \textbf{11.17} & \textbf{14.67} \\
Protenix & 7.35 & 11.30 & 14.92 \\
cuEquivariance & 7.43 & 12.11 & 16.51 \\
TriFast & 7.51 & 11.44 & 15.14 \\
MegaFold & 7.63 & 13.11 & 18.02 \\
DeepSpeed-Evo & 7.52 & 13.34 & 18.42 \\
ScaleFold & 7.91 & 12.58 & 17.15 \\
OpenFold3 Triton & 8.02 & 15.22 & 22.53 \\
\bottomrule
\end{tabular}
\end{table}

TurboPairFormer has the lowest mean at every crop. Its step time is 12.6\%, 26.6\%, and 34.9\% lower than OpenFold3 Triton's and 6.8\%, 16.3\%, and 20.4\% lower than DeepSpeed-Evo's at crops 384, 640, and 768. At crop 768, it is lower than cuEquivariance's, ScaleFold's, Protenix's, and TriFast's by 11.1\%, 14.5\%, 1.7\%, and 3.1\%. Pairing the 380 matched steps at crop 640 gives reductions of $1.18\%$ $[0.89,1.53]$ and $2.31\%$ $[1.98,2.66]$ relative to Protenix and TriFast (95\% moving-block bootstrap intervals); these intervals cover variation over matched inputs within one run, not across repeated runs. These integrated configurations do not isolate the effect of attention (Table~\ref{tpfapp:tab:component-ablation}).

\subsection{Training Loss}
\label{tpfapp:measurement-stability}
\label{sec:tpf-supp-stability}

In the H200 runs at crops 384 and 640, every loss remains finite, and the final-50-step mean losses of the seven original configurations span 1.31650--1.31981 and 1.35317--1.35514, which shows agreement over this interval, not equal losses.

\begin{figure}[!htbp]
\centering
\includegraphics[width=0.85\linewidth,trim=0 4 0 1,clip]{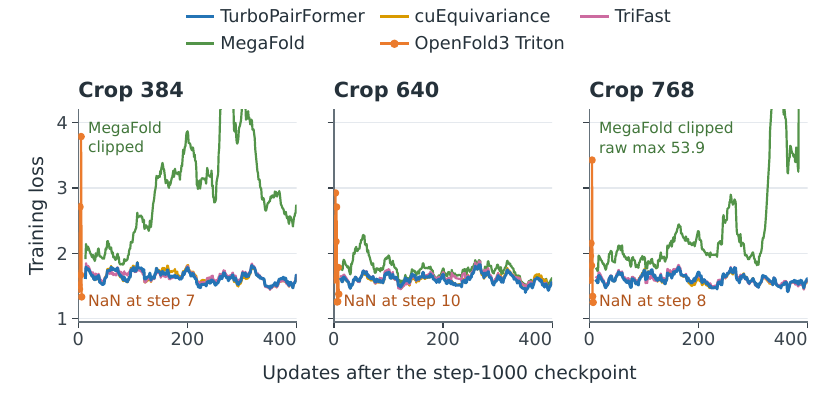}
\caption{\textbf{Training stability after a shared 1,000-step learning-rate warmup.} Five backend configurations resume the same checkpoint on one H100, with the same restart seed and matched sample order within each crop. Curves show trailing 25-update mean losses, except OpenFold3 Triton, whose finite raw losses are shown before the first NaN. MegaFold curves at crops 384 and 768 exceed the displayed range; 53.9 is the maximum raw loss at crop 768.}
\label{fig:stability}
\end{figure}

\paragraph{Post-warmup protocol.}
\label{tpfapp:stability-protocol}
Table~\ref{tpfapp:tab:training-settings} lists the settings of the runs in Figure~\ref{fig:stability}. Each run restores the model, EMA, optimizer, scheduler, training-loop, and data-module states of the shared checkpoint (Section~\ref{sec:stability}) and continues on the post-warmup plateau of OpenFold3's learning-rate schedule. The runs use Adam (betas 0.9 and 0.95, epsilon $10^{-8}$), gradient clipping at 10, BF16 mixed precision, batch size one, a model with 48 Pairformer blocks, and a fixed PDB training subset cropped contiguously, spatially, or at an interface with weights 0.2, 0.4, and 0.4. Matching sample order does not imply identical random draws throughout training. The backends are cuEquivariance 0.10.0, TriFast 0.1.13, MegaFold, OpenFold3 Triton with Triton 3.6.0, and an archived August 2026 TurboPairFormer build with single-recomputation backward and the output residual. After 400 updates, TriFast's mean losses are 1.648, 1.625, and 1.609, which, with the means of Section~\ref{sec:stability}, gives a within-crop spread of at most about 0.022; MegaFold's are 2.962, 1.746, and 2.680.

\begin{table}[!htbp]
\centering
\caption{Settings of the post-warmup study (Figure~\ref{fig:stability}) and of fine-tuning (Section~\ref{sec:tpf-validation}).}
\label{tpfapp:tab:training-settings}
\small
\begin{tabular}{@{}lll@{}}
\toprule
Setting & Post-warmup study & Fine-tuning validation \\
\midrule
Starting weights & Preview2 after 1,000 updates at crop 640 & Preview2 \\
GPUs & One H100 & 16 H100 (two nodes) \\
Crop size & 384, 640, 768 & 384 \\
Training updates & 400 & 1,000 \\
Learning rate & $1.8\times10^{-3}$, post-warmup plateau & $3\times10^{-4}$, 50 warmup steps \\
EMA decay & 0.999 & 0.98 \\
Diffusion samples & 48 & 32 \\
Seed & 42 & 42 \\
\bottomrule
\end{tabular}
\end{table}

\subsection{Additional Accuracy Results}
\label{sec:tpf-supp-protocol}
\label{sec:tpf-supp-original-plots}

\paragraph{Synthetic inputs.}
On one H100, we evaluate $N\in\{384,640,768\}$ and $D\in\{16,32\}$ with ten seeds per shape, batch size one, four heads, and $N$ slices. Each entry of $Q$, $K$, and $V$ is standard normal plus, with probability 0.001, a normal outlier with standard deviation 10; $dO$ is standard normal. All are rounded once to BF16, and the bias and mask are zero. We report the absolute RMSE of the public BF16 gradients against an FP64 reference evaluated on the same quantized inputs, as the mean and sample standard deviation over seeds.

\paragraph{Implementation variants.}
\label{tpfapp:synthetic-versions}
The accuracy and replay studies use TurboPairFormer with the residual, Protenix, cuEquivariance 0.10.0, the earlier OpenFold3 Triton source (Appendix~\ref{tpfapp:comparison}), TriFast 0.1.13, MegaFold, ScaleFold, and DeepSpeed-Evo from DeepSpeed 0.19.8. TriFast, MegaFold, and DeepSpeed-Evo include our numerical modifications: TriFast and MegaFold save the FP32 row maximum and exponential sum separately for backward probability recomputation, and DeepSpeed-Evo centers scores in FP32 before exponentiation, accepts the FP32 additive mask, and has corrected stream and boundary handling. ScaleFold runs through an adapter for these shapes.

\paragraph{Residual ablation.}
\label{sec:tpf-supp-residual}
Figure~\ref{tpfapp:fig:residual-on-off} shows the residual ablation summarized in Section~\ref{sec:accuracy_determinism}. Its dashed lines give the RMSE of the FP64 reference rounded once to BF16, the smallest RMSE any BF16 tensor can attain. With the residual, the mean $dQ$ error falls from 2.76--3.37 times this floor to 1.76--1.78 times, and the mean $dK$ and $d\,\mathrm{bias}$ errors from 1.98--2.50 times to 1.42--1.43 times. On captured inputs, the corresponding OpenFold3 median errors decrease by 35.9--55.5\%, at an additional forward-plus-backward cost of 1.06--2.33\% across five H100 operator cases.

\begin{figure}[!htbp]
\centering
\includegraphics[width=\linewidth]{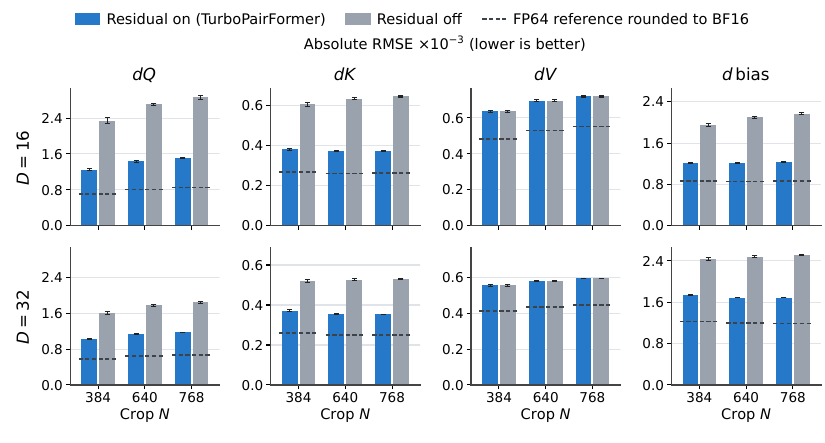}
\caption{\textbf{Residual compensation on and off.} TurboPairFormer runs with and without the saved output residual on one H100, with identical inputs and incoming gradients. Bars show the mean absolute RMSE of the public BF16 gradients against an FP64 reference over ten seeds, and error bars the sample standard deviation. Dashed lines mark the RMSE of the FP64 reference rounded once to BF16. The residual enters only the softmax correction, so $dV$ is bitwise identical in both settings.}
\label{tpfapp:fig:residual-on-off}
\end{figure}

\paragraph{Captured training inputs.}
\label{sec:tpf-supp-captured}
Table~\ref{tab:tpf-fp64-accuracy} reports relative $L_2$ errors over 32 attention calls captured during OpenFold3 training. TurboPairFormer's median $dQ$ error is 0.3769\%, about 54\% lower than either comparator's; $dK$ and the bias gradient also improve, while the output and $dV$ remain close.

\begin{table}[!htbp]
\centering
\caption{Median relative $L_2$ error (\%) against an FP64 reference over 32 captured OpenFold3 attention calls. Output error is evaluated on valid query rows. Bold marks the lowest value in each row.}
\label{tab:tpf-fp64-accuracy}
\label{tab:tpf-supp-captured-fp64}
\small
\begin{tabular}{@{}lrrr@{}}
\toprule
Quantity & Protenix & cuEquivariance & TurboPairFormer \\
\midrule
$O$ & 0.1695 & \textbf{0.1686} & \textbf{0.1686} \\
$dQ$ & 0.8202 & 0.8222 & \textbf{0.3769} \\
$dK$ & 0.4347 & 0.4358 & \textbf{0.2437} \\
$dV$ & 0.2201 & \textbf{0.2164} & \textbf{0.2164} \\
$d\,\mathrm{bias}$ & 0.3534 & 0.3506 & \textbf{0.2449} \\
\bottomrule
\end{tabular}
\end{table}

\subsection{Deterministic Fixed-Input Replay}
\label{tpfapp:measurement-replay}
\label{sec:tpf-supp-replay}

On one H100, we test 100 seeds for each of the six shapes, 600 cases per method, with the input recipe of Section~\ref{sec:tpf-supp-protocol}. Each case runs five forward and backward calls in one process with fixed BF16 inputs and $dO$, and passes when all four gradients are finite and bitwise identical across the five calls. TurboPairFormer, Protenix, ScaleFold, and TriFast pass all 600 cases; MegaFold passes 165, while cuEquivariance, OpenFold3 Triton, and DeepSpeed-Evo pass none (Figure~\ref{fig:tpf-determinism}). These measurements assess operator repeatability, not end-to-end training determinism, since optimization can amplify small numerical perturbations~\citep{summers2021nondeterminism}.

\begin{figure}[!htbp]
\centering
\includegraphics[width=\linewidth]{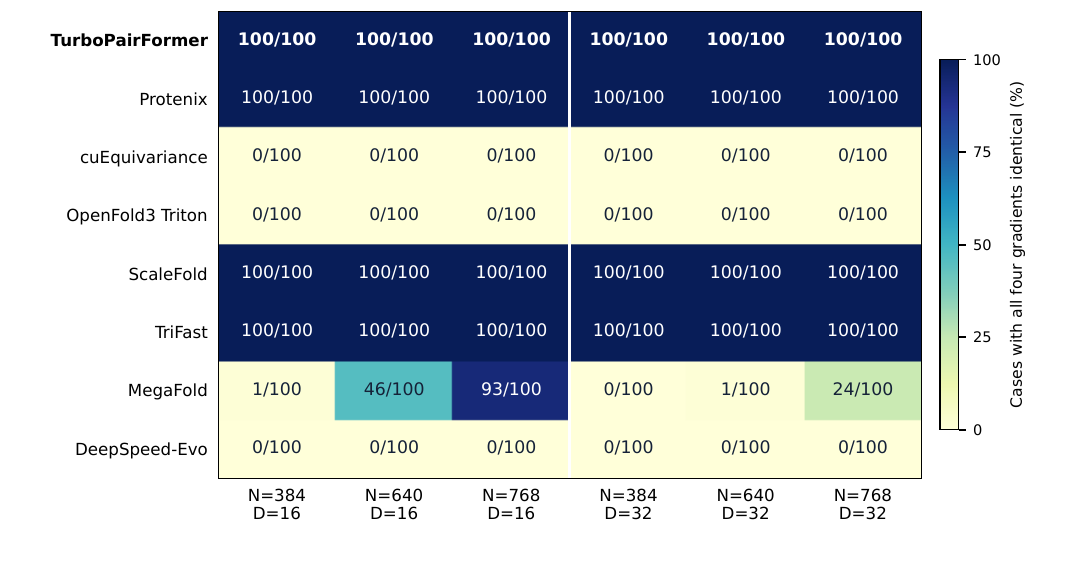}
\caption{\textbf{Bitwise gradient replay across eight attention implementations.} Each shape contains 100 fixed synthetic inputs, with five calls per input on one H100. A case passes only when all four public BF16 gradients are finite and bitwise identical across all five calls; forward output is excluded from this joint criterion.}
\label{fig:tpf-determinism}
\end{figure}

\subsection{Sixteen-GPU Fine-Tuning Validation}
\label{sec:tpf-validation-details}

Table~\ref{tpfapp:tab:training-settings} lists the fine-tuning settings. Each final EMA checkpoint and the Preview2 starting point~\citep{openfold3preview2} are evaluated with OpenFold3's native validation, using five rollouts per target and averaging, for each metric, the top-1 value (the rollout with the lowest predicted distance error) and the best-of-five value. All checkpoints use the same OpenFold3 Triton evaluation kernels and, for a given evaluation seed, the same diffusion noise, so seeds 0--4 measure evaluation sampling variability rather than variation across training runs. The model-selection score weights the intra-protein, intra-ligand, and protein--protein lDDT~\citep{mariani2013lddt} by 20 and the protein--ligand lDDT and the relative accessible surface area (RASA) term by 10, normalized by 80. Of the 85 complexes, 11, 77, 66, and 12 are eligible for the intra-protein, protein--ligand, intra-ligand, and protein--protein lDDT, 12 for weighted DockQ~\citep{basu2016dockq}, 83 for RASA, and all 85 for GDT-TS~\citep{zemla2003lga} and RMSD. Exclusion from the training split and from Preview2 pretraining was not verified, so the cohort is not an established holdout.

\paragraph{Results.} The pooled standard deviation over seeds is 0.0045 for model selection, larger than the 0.0029 standard deviation of the nine fine-tuned means. Table~\ref{tpfapp:tab:validation-paired} averages each target over the five seeds and bootstraps the 85 targets 2,000 times. A second TurboPairFormer run with identical code, configuration, and seed scores 0.0057 higher under seeds 0--2, which exceeds the magnitude of seven of the eight TurboPairFormer--comparator differences; two runs cannot characterize training variability across backends.

\begin{table}[!htbp]
\centering
\caption{Paired model-selection difference, TurboPairFormer minus each checkpoint, with targets averaged over the five seeds and 95\% bootstrap intervals over the 85 targets.}
\label{tpfapp:tab:validation-paired}
\small
\begin{tabular}{@{}lrr@{}}
\toprule
Checkpoint & Difference & 95\% interval \\
\midrule
Preview2 (step 0) & $-0.0076$ & $[-0.0236, +0.0052]$ \\
PyTorch & $-0.0030$ & $[-0.0092, +0.0022]$ \\
OpenFold3 Triton & $+0.0012$ & $[-0.0036, +0.0057]$ \\
cuEquivariance & $-0.0035$ & $[-0.0092, +0.0024]$ \\
Protenix & $-0.0009$ & $[-0.0083, +0.0061]$ \\
TriFast & $-0.0030$ & $[-0.0094, +0.0032]$ \\
MegaFold & $-0.0082$ & $[-0.0169, -0.0002]$ \\
DeepSpeed-Evo & $-0.0006$ & $[-0.0077, +0.0055]$ \\
ScaleFold & $-0.0050$ & $[-0.0131, +0.0019]$ \\
\bottomrule
\end{tabular}
\end{table}

\section{Single-GPU Operator Measurements}
\label{tpfapp:operator}

This appendix times single operator calls on one NVIDIA H100. Both operators are only part of a training step, so these reductions do not carry over directly to step time (Table~\ref{tpfapp:tab:component-ablation}).

\subsection{Triangular Attention}
\label{tpfapp:operator-attention}

We time one attention call at $N\in\{384,640,768\}$ and $D\in\{16,32\}$, with $B=1$, $N_o=N$, and $H=4$; OpenFold3 uses $D=16$ only in its two-block template stack. The backends are TurboPairFormer in its training configuration, cuEquivariance 0.10.0, and Protenix after autotuning, each called as the OpenFold3 attention layer calls it, with gradients taken with respect to the projection outputs. Each entry in Table~\ref{tpfapp:tab:operator} averages two processes, each pooling five rounds of 15 calls with the backend order rotated between rounds.

\begin{table}[!htbp]
\centering
\caption{Time in ms of one attention call on one H100 ($B=1$, $N_o=N$, $H=4$) for TurboPairFormer (Ours) in its training configuration, cuEquivariance 0.10.0 (cuEq), and Protenix after autotuning. The last three columns give TurboPairFormer's change in time relative to cuEquivariance; negative values mean TurboPairFormer is faster.}
\label{tpfapp:tab:operator}
\footnotesize
\setlength{\tabcolsep}{3pt}
\begin{tabular}{@{}rrrrrrrrrrrrrr@{}}
\toprule
 & & \multicolumn{3}{c}{Forward} & \multicolumn{3}{c}{Backward} & \multicolumn{3}{c}{Forward + backward} & \multicolumn{3}{c}{Change vs.\ cuEq (\%)} \\
\cmidrule(lr){3-5}\cmidrule(lr){6-8}\cmidrule(lr){9-11}\cmidrule(l){12-14}
$D$ & $N$ & Ours & cuEq & Protenix & Ours & cuEq & Protenix & Ours & cuEq & Protenix & Fwd & Bwd & F+B \\
\midrule
16 & 384 & 0.384 & 0.640 & 0.417 & 1.167 & 2.097 & 1.285 & 1.493 & 2.623 & 1.587 & $-39.9$ & $-44.3$ & $-43.1$ \\
16 & 640 & 1.177 & 2.041 & 1.316 & 4.300 & 8.046 & 4.655 & 5.398 & 9.918 & 5.830 & $-42.3$ & $-46.6$ & $-45.6$ \\
16 & 768 & 1.870 & 3.341 & 2.137 & 7.392 & 13.244 & 7.563 & 9.025 & 16.478 & 9.499 & $-44.0$ & $-44.2$ & $-45.2$ \\
\midrule
32 & 384 & 0.474 & 0.708 & 0.502 & 1.520 & 2.430 & 1.482 & 1.920 & 3.015 & 1.854 & $-33.1$ & $-37.5$ & $-36.3$ \\
32 & 640 & 1.538 & 2.192 & 1.602 & 5.468 & 8.764 & 5.654 & 6.810 & 10.823 & 7.129 & $-29.8$ & $-37.6$ & $-37.1$ \\
32 & 768 & 2.448 & 3.557 & 2.630 & 9.348 & 14.544 & 8.764 & 11.578 & 17.965 & 11.262 & $-31.2$ & $-35.7$ & $-35.6$ \\
\bottomrule
\end{tabular}
\end{table}

TurboPairFormer is faster than cuEquivariance at every shape and in every mode: at $D=32$, its forward, backward, and forward-plus-backward times are 29.8--33.1\%, 35.7--37.6\%, and 35.6--37.1\% lower, and at $D=16$ they are 39.9--44.0\%, 44.2--46.6\%, and 43.1--45.6\% lower. Against Protenix, its forward pass is 3.9--12.5\% faster at every shape. Its backward pass is 2.3--9.2\% faster at $D=16$; at $D=32$, it is 3.3\% faster at $N=640$ but 2.5\% and 6.7\% slower at $N=384$ and $768$, so forward plus backward ranges from 7.4\% faster to 3.6\% slower. The saved residual adds 1.0--2.6\% to forward-plus-backward time. TurboPairFormer's gradients differ from those of the other two backends by a relative $L_2$ distance of at most $4.4\times10^{-3}$.

\paragraph{One recomputation versus two.} As an ablation, a two-pass backward variant that recomputes the probabilities once for the key and value gradients and again for the query and pair-bias gradients takes 28.3--64.0\% longer than the single-recomputation backward, with the output residual disabled in both. Its kernels also differ in other respects, so the gap does not isolate the recomputation count. One call of our kernel allocates 0.36--3.09\,GiB beyond its inputs, compared with 0.19--1.46\,GiB for cuEquivariance, 0.10--0.73\,GiB for Protenix, and 0.24--1.23\,GiB for the two-pass variant, which stores no query partials. At $D=32$ and $N=768$, the query partials take 1.69 of the 3.09\,GiB.

\subsection{Forward Dataflow Ablation}
\label{tpfapp:forward-ablation}

Forward executions outnumber backward executions in OpenFold3 training: in a profiled crop-768 update with three recycling iterations, 600 forward executions of triangular attention, 120 of them repeated by activation checkpointing, accompany 120 backward executions. A four-arm ablation isolates the forward changes of Appendix~\ref{tpfapp:forward} at $D=32$. The baseline stages keys and values with TMA but reads the bias and mask directly from global memory and evaluates all probabilities of a key tile before aggregation. The core arm adds bias staging, mask prefetching, and chunked evaluation; the pack arm changes only the repacking of $Q$, $K$, and $V$; and the combined arm contains both changes. All arms share the same backward kernels and give bitwise-identical outputs and gradients.

\begin{table}[!htbp]
\centering
\caption{Forward dataflow ablation on one H100 ($B=1$, $N_o=N$, $H=4$, $D=32$). Times are means of 160 forward calls per arm; percentages are mean reductions over eight paired blocks relative to the baseline.}
\label{tpfapp:tab:forward-ablation}
\small
\begin{tabular}{@{}rrrrrrrr@{}}
\toprule
 & \multicolumn{3}{c}{Forward time (ms)} & \multicolumn{2}{c}{Forward reduction (\%)} & \multicolumn{2}{c}{F+B reduction (\%)} \\
\cmidrule(lr){2-4}\cmidrule(lr){5-6}\cmidrule(l){7-8}
$N$ & Baseline & Core & Combined & Core & Combined & Core & Combined \\
\midrule
384 & 0.547 & 0.408 & 0.388 & 25.5 & 29.1 & 8.1 & 9.2 \\
640 & 2.037 & 1.478 & 1.426 & 27.4 & 30.0 & 8.3 & 9.0 \\
768 & 3.336 & 2.419 & 2.341 & 27.5 & 29.8 & 7.8 & 8.5 \\
\bottomrule
\end{tabular}
\end{table}

The core changes cut forward time by 25.5--27.5\%, the combined arm by 29.1--30.0\%, and repacking alone by 2.4--3.5\%. The combined arm lowers forward-plus-backward time by 8.5--9.2\%, while backward time changes by at most 0.2\%. The core changes are measured together, so the ablation does not apportion their saving among bias staging, mask prefetching, and chunked evaluation.

\subsection{Triangle Multiplication}
\label{tpfapp:operator-trimul}

We time one forward-plus-backward triangle multiplicative update against cuEquivariance 0.10.0 at the twelve OpenFold3 training shapes with one example: crops 384, 640, and 768; 128 channels, as in the Pairformer and MSA module, or 64, as in the template stack; and the incoming and outgoing directions. Each call runs under BF16 autocast, and each of three rounds times cuEquivariance, TurboPairFormer, and cuEquivariance again.

\begin{table}[!htbp]
\centering
\caption{Median time in ms of one triangle multiplicative update, forward plus backward, on one H100 with one example. Parentheses give TurboPairFormer's (Ours) change in time relative to cuEquivariance 0.10.0; negative values mean TurboPairFormer is faster.}
\label{tpfapp:tab:trimul-operator}
\small
\begin{tabular}{@{}rrrrrr@{}}
\toprule
 & & \multicolumn{2}{c}{Incoming} & \multicolumn{2}{c}{Outgoing} \\
\cmidrule(lr){3-4}\cmidrule(l){5-6}
$N$ & Channels & cuEquivariance & Ours & cuEquivariance & Ours \\
\midrule
384 & 128 & 2.769 & 1.326 ($-52.1\%$) & 2.777 & 1.323 ($-52.4\%$) \\
640 & 128 & 3.615 & 3.334 ($-7.8\%$) & 3.678 & 3.336 ($-9.3\%$) \\
768 & 128 & 5.142 & 4.788 ($-6.9\%$) & 5.210 & 4.782 ($-8.2\%$) \\
\midrule
384 & 64 & 2.734 & 1.111 ($-59.4\%$) & 2.735 & 1.116 ($-59.2\%$) \\
640 & 64 & 2.777 & 1.921 ($-30.8\%$) & 2.784 & 1.922 ($-31.0\%$) \\
768 & 64 & 2.842 & 2.692 ($-5.3\%$) & 2.879 & 2.691 ($-6.5\%$) \\
\bottomrule
\end{tabular}
\end{table}

TurboPairFormer is faster at all twelve shapes. With 128 channels, it takes 52.1--52.4\%, 7.8--9.3\%, and 6.9--8.2\% less time at $N=384$, 640, and 768. cuEquivariance takes 2.7--2.9\,ms at every shape except the 128-channel shapes with $N\geq640$, a per-call floor that explains the large reductions at $N=384$ and with 64 channels.

\section{Choosing the Deterministic Reduction}
\label{tpfapp:determinism}

With one recomputation, a thread block emits each tile's contributions to all four gradients, so any gradient that sums along an axis the block does not span needs a reduction across thread blocks. Under the work partitioning of Section~\ref{sec:single}, two remain: the query gradient over key tiles and the pair-bias gradient over outer groups. TurboPairFormer makes both deterministic with materialized partials summed in index order (Section~\ref{sec:leaves}). This appendix explains the choice and measures it on one H100. At $D=32$, the fastest tested non-deterministic variant of our kernel, which accumulates the pair-bias gradient with atomics, takes 2.8--4.1\% less backward time, and the deterministic kernel is faster than assigning one thread block per slice at every shape. At $D=16$, which only the template stack uses, the fastest tested non-deterministic variant takes 11.0--13.7\% less time.

\subsection{Which Gradients Need a Cross-Block Reduction}
\label{tpfapp:crossing}
\label{tpfapp:principles}

Kernels that traverse the scores once per gradient need no cross-block reduction, because each traversal can span the axis along which its gradient sums; TriFast and Protenix work this way at the price of three recomputations (Appendix~\ref{tpfapp:comparison}). With one recomputation, the work partitioning decides which gradients need such a reduction and how many blocks a launch contains (Table~\ref{tpfapp:tab:ownership}).

The pair-bias gradient stays within a thread block only if the block spans all $N_o$ outer indices (lower block of Table~\ref{tpfapp:tab:ownership}). A block that owns one query--key tile then leaves $dQ$, $dK$, and $dV$ to cross-block reductions, and a block that owns a key tile leaves too few blocks: $HJ_K=48$ at $N=768$ with four heads, fewer than the 132 SMs of an H100. We therefore partition the outer indices, a design choice that requires a cross-block pair-bias reduction; this reduction exists only because all slices share the bias. Among the partitionings that split the outer indices, assigning key tiles keeps $dK$ and $dV$ in registers and leaves $dQ$ as the only slice-shaped cross-block reduction, whereas assigning whole slices, as DeepSpeed-Evo does, removes that reduction but launches $J_K$ times fewer blocks. Grouping $G=4$ outer indices per block sums their pair-bias contributions on chip in a fixed tree, dividing the number of bias partials by $G$.

\begin{table}[!htbp]
\centering
\caption{Work partitioning for one example. A thread block handles one head, owns the listed part of the $(o,q,k)$ tile space, and iterates over the axes it spans. $J_Q$ and $J_K$ are the numbers of query and key tiles. The partials column counts the $Q$-sized partial tensors needed when the cross-block reductions of slice-shaped gradients use materialized partials; the upper block also needs $L=N_o/G$ bias-sized partials for the pair bias.}
\label{tpfapp:tab:ownership}
\small
\begin{tabular}{@{}lllll@{}}
\toprule
Thread block owns & Local & Cross-block & Partials & Thread blocks \\
\midrule
\multicolumn{5}{@{}l}{\emph{$G$ outer indices per thread block}} \\
\quad Key tile (ours) & $dK$, $dV$ & $dQ$, $d\,\mathrm{bias}$ & $J_K$ & $HJ_KN_o/G$ \\
\quad Query tile & $dQ$ & $dK$, $dV$, $d\,\mathrm{bias}$ & $2J_Q$ & $HJ_QN_o/G$ \\
\quad Whole slices (DeepSpeed-Evo) & $dQ$, $dK$, $dV$ & $d\,\mathrm{bias}$ & none & $HN_o/G$ \\
\midrule
\multicolumn{5}{@{}l}{\emph{All $N_o$ outer indices per thread block}} \\
\quad Query--key tile & $d\,\mathrm{bias}$ & $dQ$, $dK$, $dV$ & $J_K+2J_Q$ & $HJ_QJ_K$ \\
\quad Key tile & $dK$, $dV$, $d\,\mathrm{bias}$ & $dQ$ & $J_K$ & $HJ_K$ \\
\bottomrule
\end{tabular}
\end{table}

\subsection{Three Ways to Make a Cross-Block Reduction Deterministic}
\label{tpfapp:mechanisms}

A reduction across thread blocks is bitwise repeatable only if its association order is fixed or its arithmetic is order-independent, as with binned accumulators~\citep{ahrens2020reproducible} or superaccumulators~\citep{collange2015reproducible}. Attention kernels fix the order in three ways (Figure~\ref{fig:ordering}a).

\paragraph{Semaphore-ordered atomic accumulation.} Each thread block atomically adds its contribution to one FP32 buffer, and a semaphore grants the turns in a fixed order, so a block whose turn has not come waits. The deterministic mode of FlashAttention-3 accumulates the query gradient this way, in ascending key-tile order for non-causal attention~\citep{shah2024flashattention3}, and DASH chooses the turn order together with the tile schedule so that, in its scheduling model, the waits do not lengthen the critical path~\citep{qiang2026dash}.

\paragraph{One thread block per slice.} A thread block that owns whole slices visits every key tile itself and accumulates the query gradient in program order. DeepSpeed-Evo obtains a deterministic query gradient this way~\citep{song2023deepspeed4science}, and so does the deterministic mode of FlashAttention-2 for triangular attention. It keeps one FP32 query-gradient buffer per split and sets the number of splits to the SM count divided by the number of attention problems, rounded up,\footnote{\label{fn:fa2-source}Official FlashAttention-2 implementation, commit \texttt{98eb7998}: \href{https://github.com/Dao-AILab/flash-attention/blob/98eb7998a0eba4047c7a30375522569d4b8efb20/csrc/flash_attn/flash_api.cpp\#L921-L930}{FP32 buffers per split} and \href{https://github.com/Dao-AILab/flash-attention/blob/98eb7998a0eba4047c7a30375522569d4b8efb20/csrc/flash_attn/src/flash_bwd_launch_template.h\#L71-L124}{deterministic launch and reduction}.} which yields one split per slice for the $BN_oH$ problems of triangular attention (3,072 at $N=768$ with four heads). No block waits, but the launch has $J_K$ times fewer blocks than partitioning by key tile.

\paragraph{Materialized partials and a reduction kernel.} Each thread block writes its contribution as a partial that no other block writes, and after these blocks finish, a separate kernel sums the partials in index order. TurboPairFormer applies this to the query gradient, in the ascending key-tile order of FlashAttention-3's turns, and to the pair-bias gradient, a reduction that standard attention does not have. No block waits; the partials cost memory, one additional pass, and, because they are stored in BF16, one rounding per partial before the FP32 sum.

\paragraph{Choice for triangular attention.} Once the outer indices are partitioned, semaphore ordering would need, for every pair-bias tile, a chain of turns through all $L=N_o/G$ outer groups (192 at $N=768$, against $J_K=12$ key tiles for $dQ$), and a schedule that keeps both families of chains free of stalls. Materialized partials handle both reductions with one mechanism. Because each sum is ordered by partial index, they also let the planner split the backward pass into memory-bounded launches without changing the gradient bits (Proposition~\ref{tpfapp:prop:invariance}).

\begin{figure}[!t]
\centering
\includegraphics[width=\linewidth]{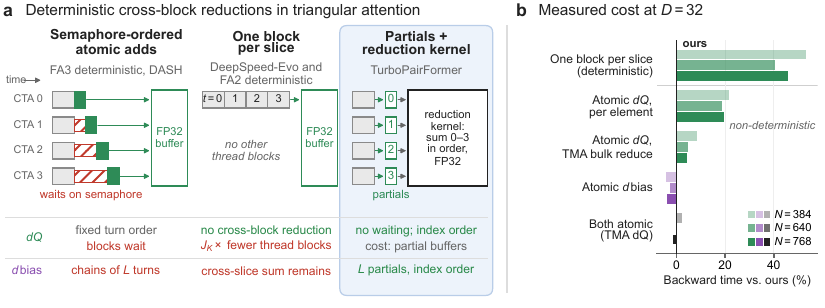}
\caption{\textbf{Making the cross-block reductions deterministic.} (a) Four thread blocks (CTA 0--3) contribute to one query-gradient reduction; the bottom row gives each scheme's consequence for the pair-bias gradient, which sums across slices. With semaphore-ordered atomic adds, the blocks add into one FP32 buffer in turns and wait for them; FlashAttention-3 grants the turns in key-tile order, and DASH chooses the order together with the schedule. With one thread block per slice, a single block visits all $J_K$ key tiles $t$ of the slice, so the launch has $J_K$ times fewer thread blocks; the deterministic mode of FlashAttention-2 falls in this class here. With materialized partials, blocks write one partial per key tile without waiting and a reduction kernel sums them in index order; TurboPairFormer uses this for $dQ$ ($J_K$ partials) and $d\,\mathrm{bias}$ ($L$ partials). (b) Change in backward time of variants of our kernel relative to the deterministic kernel at $D=32$ on one H100 (Table~\ref{tpfapp:tab:ordering-cost} and Section~\ref{tpfapp:slice-ownership}). The atomic variants are not deterministic; ``both atomic'' combines TMA bulk reduction for $dQ$ with atomic $d\,\mathrm{bias}$. Semaphore-ordered variants are not measured.}
\label{fig:ordering}
\end{figure}

\subsection{Measured Cost of the Reduction Implementations}
\label{tpfapp:ordering-cost}

We replace the deterministic reductions of our backward kernel with FP32 atomic accumulation and keep everything else fixed: atomic pair-bias accumulation; per-element atomic additions to $dQ$; or, as FlashAttention-3 accumulates the query gradient, $dQ$ tiles staged in shared memory and added with one TMA bulk reduction each. Each variant is timed in six rounds of 30 calls with the variant order rotated, and a second process reproduces every percentage within 0.7 percentage points.

\begin{table}[!htbp]
\centering
\caption{Backward time of one TurboPairFormer attention call on one H100 ($B=1$, $N_o=N$, $H=4$). The deterministic kernel reduces both gradients through materialized partials. Each other column replaces the named reductions with non-deterministic atomic accumulation in FP32 and gives the change in time relative to the deterministic kernel; positive values mean the variant is slower. ``Both'' combines atomic pair-bias accumulation with the named query-gradient accumulation.}
\label{tpfapp:tab:ordering-cost}
\small
\setlength{\tabcolsep}{4pt}
\begin{tabular}{@{}rrrrrrrr@{}}
\toprule
 & & & \multicolumn{5}{c}{Change with non-deterministic atomic accumulation (\%)} \\
\cmidrule(l){4-8}
$D$ & $N$ & \shortstack[r]{Deterministic\\(ms)} & \shortstack[r]{Atomic\\$d\,\mathrm{bias}$} & \shortstack[r]{Per-element\\$dQ$} & \shortstack[r]{Both,\\per-element} & \shortstack[r]{TMA bulk\\$dQ$} & \shortstack[r]{Both,\\TMA bulk} \\
\midrule
16 & 384 & 0.849 & $-8.0$ & $+1.9$ & $-6.7$ & $-5.2$ & $-11.0$ \\
16 & 640 & 3.582 & $-7.3$ & $-0.3$ & $-7.2$ & $-7.1$ & $-12.6$ \\
16 & 768 & 6.349 & $-7.9$ & $-0.1$ & $-7.8$ & $-7.2$ & $-13.7$ \\
\midrule
32 & 384 & 1.091 & $-4.1$ & $+21.5$ & $+13.6$ & $+8.3$ & $+2.2$ \\
32 & 640 & 4.505 & $-2.8$ & $+18.9$ & $+13.3$ & $+4.6$ & $-0.1$ \\
32 & 768 & 8.045 & $-3.9$ & $+19.6$ & $+13.0$ & $+4.2$ & $-1.3$ \\
\bottomrule
\end{tabular}
\end{table}

At $D=32$, the materialized query-gradient partials beat both atomic alternatives, which lengthen the backward pass by 18.9--21.5\% and 4.2--8.3\%, and atomic pair-bias accumulation alone is the fastest tested non-deterministic variant, 2.8--4.1\% faster. At $D=16$, combining TMA bulk reduction of $dQ$ with atomic pair-bias accumulation is fastest, 11.0--13.7\% faster. Profiles at $N=768$ and $D=32$ trace the slowdown of per-element atomics to contention in the L2 cache, which receives 99.1 million single-sector reduction requests, whereas the reduction kernel reads the 1.69\,GiB of query partials once in 0.63\,ms under profiler clocks. This compares BF16 partials and a fixed FP32 reduction with direct FP32 atomics, so it does not isolate the cost of determinism, and it does not bound semaphore-ordered accumulation, which requires direct measurement.

\subsection{One Thread Block per Slice}
\label{tpfapp:slice-ownership}

Assigning each thread block the whole slices of its four outer indices removes the query-gradient reduction, as in DeepSpeed-Evo: the block visits every key tile itself and accumulates $dQ$ in a dense FP32 buffer. We built this as a variant of our kernel with the same pair-bias path and timed it against the key-tile kernel of the same build, under the default workspace plan and under a plan with every outer group resident, which the variant can afford because it stores no query partials. It is deterministic, and its key, value, and pair-bias gradients equal ours bit for bit, but its backward pass is 22.8--95.9\% slower across the six shapes and two plans. The gap narrows when more outer groups run concurrently, from 45.9\% to 22.8\% at $D=32$ and $N=768$, which points to its $J_K$ times smaller launch.

\subsection{Workspace Plans, Memory, and Precision}
\label{tpfapp:plan-precision}

At every shape, our kernel's four gradients are bitwise identical under the default workspace plan and under the plan with every outer group resident, as Proposition~\ref{tpfapp:prop:invariance} predicts; at $N=768$, these plans process the 192 outer groups in four launches and in one. The resident plan shortens the backward pass by 5.1\% there but needs an 864\,MiB rather than a 252\,MiB bias workspace. The BF16 partials are the main memory cost: 1.69\,GiB of query partials at $D=32$ and $N=768$, against one 288\,MiB FP32 buffer for semaphore-ordered or per-slice accumulation. Relative to FP32 atomic accumulation, their BF16 rounding changes $dQ$ and $d\,\mathrm{bias}$ by a relative $L_2$ error of $2.7$--$2.8\times10^{-3}$ and $1.6$--$1.7\times10^{-3}$, whereas FP32 accumulations that differ only in addition order agree within $2.4\times10^{-5}$. The accuracy comparison of Section~\ref{sec:accuracy_determinism} includes this rounding.

\section{Triangle Multiplication}
\label{tpfapp:trimul}
\label{sec:tpf-supp-trimul}

Triangle multiplication updates a pair by combining two transformed pair features that share a third token, contracting over that token separately for each channel. Our outgoing implementation (Figure~\ref{tpfapp:fig:trimul-pipeline}) places two fused CUDA kernels around one cuBLAS batched matrix multiplication. The input kernel normalizes each pair and computes five projections: two branch values, two branch gates, and the output-gate logits. The output kernel normalizes the contraction result, projects it, and applies the saved output gate. Fusion removes the global-memory round trips of intermediates within each stage, while the branch tensors and the contraction result still pass through global memory. The backward pass evaluates the projection tiles again instead of storing them and combines weight-gradient chunks in a fixed FP32 order; the attention replay results do not cover TriMul.

Unlike cuEquivariance 0.10.0, which evaluates the output gate in its output stage in FP32 registers, we compute the gate logits in the input kernel and store them in BF16. Against a PyTorch FP32 reference, the median relative $L_2$ errors nevertheless match closely: 0.386\% versus 0.384\% for the output and 0.604\% versus 0.607\% for the input gradient, compared with 0.440\% and 0.823\% for PyTorch with BF16 inputs. This compares complete implementations and does not isolate the effect of BF16 gate storage.

\begin{figure}[!t]
\centering
\includegraphics[width=\linewidth]{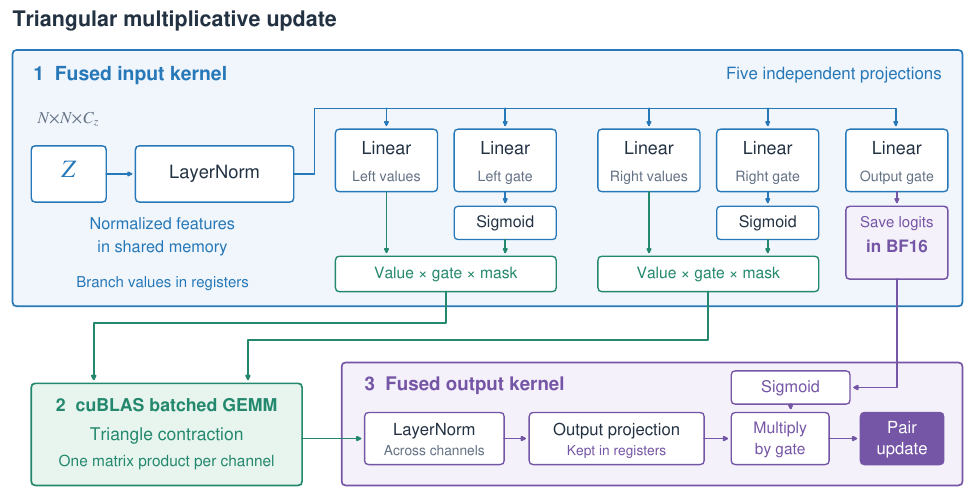}
\caption{\textbf{Outgoing triangle multiplicative update.} The fused input kernel normalizes the pair features and computes five independent projections: two branch values, two branch gates, and the output gate logits. The normalized features use shared memory, while branch values remain in registers until gating and masking. A separate cuBLAS batched GEMM contracts the branch operands independently for each channel. The fused output kernel combines normalization, output projection, and multiplication by the sigmoid of the saved output gate logits.}
\label{tpfapp:fig:trimul-pipeline}
\end{figure}

\paragraph{Component ablation.}
Table~\ref{tpfapp:tab:component-ablation} replaces attention, TriMul, or both in a cuEquivariance-enabled OpenFold3 run of 100 steps per crop with gradient accumulation of 16. The percentages are reductions in full training step latency, and the combined results are measured directly rather than summed.

\begin{table}[!htbp]
\centering
\caption{Training step latency reduction (\%) from replacing attention, TriMul, or both in a cuEquivariance-enabled OpenFold3 baseline.}
\label{tpfapp:tab:component-ablation}
\small
\begin{tabular}{@{}lrrr@{}}
\toprule
Crop & Attention only & TriMul only & Both \\
\midrule
384 & 2.21 & 2.86 & 5.55 \\
640 & 5.59 & 2.78 & 8.45 \\
768 & 6.43 & 2.99 & 9.63 \\
\bottomrule
\end{tabular}
\end{table}

\end{document}